\def\marginpar#1{\ignorespaces}
\DeclareMathOperator\argmax{\arg\max}
\newtheorem{theorem}{Theorem}[section]
\newtheorem{corollary}[theorem]{Corollary}
\newtheorem{definition}[theorem]{Definition}
\numberwithin{equation}{section}
\begin{document}
\title[CSDM]{Diffusion generative models meet compressed sensing,
\\with applications to imaging and finance}

\author[Zhengyi Guo]{{Zhengyi} Guo}
\address{Department of Industrial Engineering and Operations Research, Columbia University. 
} \email{zg2525@columbia.edu}

\author[Jiatu Li]{{Jiatu} Li}
\address{Department of Statistics, Columbia University. 
} \email{jl6969@columbia.edu}

\author[Wenpin Tang]{{Wenpin} Tang}
\address{Department of Industrial Engineering and Operations Research, Columbia University. 
} \email{wt2319@columbia.edu}

\author[David D. Yao]{{David D.} Yao}
\address{Department of Industrial Engineering and Operations Research, Columbia University. 
} \email{ddy1@columbia.edu}

\date{\today} 
\begin{abstract}
In this study we develop dimension-reduction techniques to accelerate diffusion model inference in the context of synthetic data generation.
The idea is to integrate compressed sensing into diffusion models (hence, CSDM):
First, compress the dataset into a latent space (from an ambient space),
and train a diffusion model in the latent space; next,
apply a compressed sensing algorithm to the samples generated in the latent space for decoding back to 
the original space; and 
the goal is to facilitate the efficiency of both model training and inference.
Under certain sparsity assumptions on data, 
our proposed approach achieves provably faster convergence, 
via combining diffusion model inference with sparse recovery.
It also sheds light on the best choice of the latent space dimension.
To illustrate the effectiveness of this approach, 
we run numerical experiments on a range of datasets,
including handwritten digits, medical and climate images, 
and financial time series for stress testing. 
Our code is available at \url{https://github.com/ZhengyiGuo2002/CSDM-code}.
\end{abstract}

\maketitle
\textit{Key words}: Complexity, Compressed sensing, Diffusion models, Inference time, Signal recovery, Sparsity.  

\section{Introduction}
\label{sc1}

\quad Diffusion models have played a central role in 
the recent success in text-to-image creators 
such as DALL·E 2 \cite{Ramesh22} and Stable Diffusion \cite{Rombach22},
and in text-to-video generators such as Sora \cite{Sora}, Make-A-Video \cite{Singer22} and Veo \cite{Veo}.
Despite their success in the domain of computer vision
(and more recently in natural language processing \cite{Nie25, KK25}),
the usage of diffusion models for data generation in other fields such as operations research 
and operations management remains underdeveloped.
In those application domains, the diffusion models are prohibitively
demanding in computational effort for both training and inference, which will typically
require a large number of function 
evaluations (NFEs) in high-dimensional ambient spaces, creating bottlenecks
in major performance benchmarks such as
memory bandwidth and wall-clock time, rendering the models impractical for 
real-time and on-device deployment.

\quad As observed in \cite{Do06, Pop21, WZZC25}, many existing datasets enjoy low-dimensional structures.
So a natural solution to the difficulties mentioned above
is to apply dimension reduction techniques to diffusion models. 
The pioneer work \cite{Karras22, Rombach22} proposed the idea of training a diffusion model on a 
{\em latent} space instead of directly on the ambient space. 
This has triggered subsequent works on finding a suitable low-dimensional latent space for diffusion model training (see e.g., \cite{CHZW23, CX25}).
Also refer to \cite{MX25} for inference time scaling for diffusion models.

\quad The objective of our study here is also to accelerate diffusion generation
by exploiting the sparsity nature of the underlying dataset. 
Specifically, we develop an integrated compressed sensing and diffusion model (CSDM)
with the following features:
\begin{itemize}[itemsep = 3 pt]
\item 
We embed a sparse recovery algorithm in compressed sensing \cite{CT05, CT06, Do06} into the 
 diffusion model via the following steps, which we call the {\it CSDM (Generation) Pipeline}:
(i) compress the data in $\mathbb{R}^d$ into a low-dimensional latent space $\mathbb{R}^m$ ($m \ll d$);
(ii) train a diffusion model in the compressed/latent space $\mathbb{R}^m$ for inference;
(iii) apply the sparse recovery algorithm FISTA 
to the samples generated in the latent space for decoding back to $\mathbb{R}^d$. 
Refer to the flow diagram in the figure below.
\item
We provide a complexity analysis of CSDM that accounts for the computational efforts in both
the diffusion inference and the compressed sensing recovery. This leads to, as a byproduct, 
some useful guidance on the choice of
the latent space dimension.
(For instance, in the very sparse setting, 
the commonly adopted DDPM model \cite{Ho20} with FISTA \cite{BT092, BT09} for recovery
yields the complexity $\mathcal{O}(\sqrt{d})$; 
hence, the optimal compressed dimension $m = \mathcal{O}(\sqrt{d})$.)
\item 
We apply the proposed CSDM pipeline to various image datasets,
including MNIST (handwritten digits), OCTMNIST (medical),
and ERA5 Reanalysis (climate).
Furthermore, motivated by the idea of dimension reduction in
compressed sensing, we embed principle component analysis (PCA),
another dimension-reduction technique favored by portfolio
analyses, into the diffusion model for applications that 
involve financial time series used in stress tests
for identifying systemic risk.
In all these experiments, the CSDM pipeline has successfully preserved high sample fidelity,
while delivering substantial wall-clock speedups.
\end{itemize}

\begin{figure}[htbp]
\centering
\begin{tikzpicture}[
    node distance=2cm and 5cm,
    img/.style={draw, circle, minimum size=1.5cm, text width=1.2cm, fill=blue!30},
    img1/.style={draw, circle, minimum size=1.5cm, text width=1.2cm, fill=green!30},
    arrow/.style={->, thick, >=Stealth},
    label/.style={font=\footnotesize},
    every node/.style={align=center}
  ]
  \node[img] (x) {$p_{data}(\cdot)$};
  \node[label, below=0cm of x] 
    {\textbf{Input}\\$x \in \mathbb{R}^d$};
  \coordinate[right=6.5cm of x] (arrow1end);
  \draw[arrow] (x) -- (arrow1end) node[midway, above] {Compression (by sketching)};
  \node[img1, right=0cm of arrow1end] (y) {$\widetilde{p}_{\tiny \mbox{data}}(\cdot)$};
  \node[label, right=0.3cm of y] 
    {\textbf{Compressed}\\$y = Ax \in \mathbb{R}^m$};
  \coordinate[below=2.68cm of y] (arrow2end);
  \draw[arrow] (y) -- (arrow2end) node[midway, right] {Diffusion model \\ generation};
  \node[img1, below=0cm of arrow2end] (yh) {};
  \node[label, right=0.3cm of yh] 
    {\textbf{Generated}\\$\widetilde{y} \in \mathbb{R}^m$};
  \coordinate[left=6.5cm of yh] (arrow3end);
  \draw[arrow] (yh) -- (arrow3end) node[midway, below] {Compressed Sensing};
  \node[img, left=0cm of arrow3end] (xh) {};
  \node[label, below=0cm of xh] 
    {\textbf{Recovered}\\$\widetilde{x} \in \mathbb{R}^d$};
\end{tikzpicture}
\caption{CSDM Generation Pipeline.}
\end{figure}
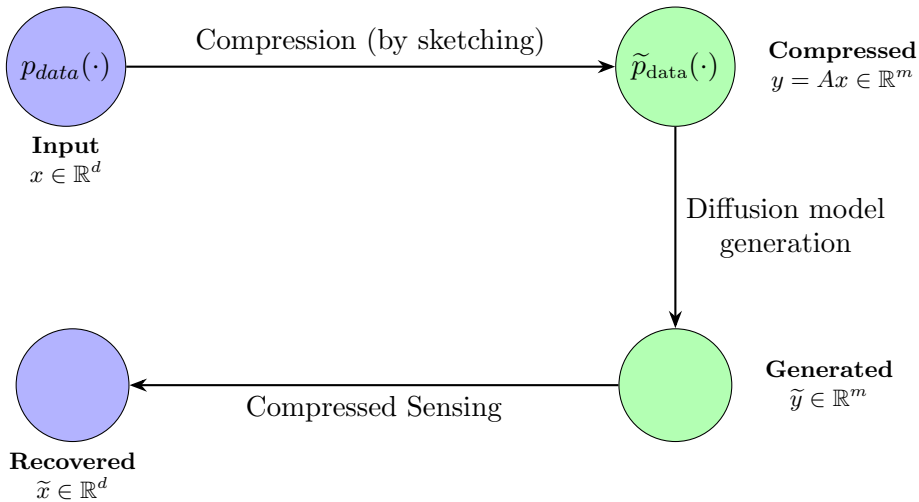

\quad 
We believe ours is the first study that formally integrates 
a whitebox encoder-decoder algorithm, 
such as FISTA in compressed sensing and PCA in 
financial analysis, into diffusion models, so that key components of both training and inference, such as
score evaluation, backpropagation and sampling, can benefit from the compressed dimension $m \ll d$, and 
achieve significantly improved efficiency and speedup.

\quad 
For applications, our approach is designed for decision-centric 
workflows that involve large scenario-based datasets, operating under a tight computing budget.
For example, in climate and energy applications, a common challenge is the capability to
generate compressed-domain ensembles of gridded weather fields (e.g., precipitation and irradiance),
and decode only the subsets needed for unit commitment, reserve sizing, and
chance-constrained optimal power flow. 
These are all essential and critical components in order to carry out
focused Monte-Carlo and what-if studies, while staying within a common wall-clock budget.

\quad 
The CSDM approach can be readily extended to other AI applications of diffusion models, 
such as fine-tuning/post-training/alignment \cite{Black24, Fan23, ZC25} 
(also see \cite[Section 4.5]{Win25} for a review). 
Notably, for instance, extending CSDM to fine-tuning will be similar 
in spirit to using 
a ``bad" (i.e., coarser) version of itself to achieve better results, as advocated in a recent work \cite{Karras24}.
Indeed, extensions in this direction will be the focus of our follow-up studies.

\medskip
{\bf Related Literature}:
Here we provide a brief review of the most relevant works.
Diffusion models were proposed by \cite{Ho20, SE19, Song20} 
in the context of generative modeling.
Empirically diffusion models have been shown to 
outperform other generative models such as GANs on various synthetic tasks \cite{Dh21, Kong21}.
Subsequent works studied the convergence of diffusion models \cite{Chen23, GNZ23, LW23}; 
see Section \ref{sc21} for more references.
As mentioned earlier, the training of diffusion models often suffers from the curse of dimensionality.
This leads to the works of finding provably good latent spaces for diffusion model training \cite{CHZW23, CX25}.

\quad 
There are also numerous approaches aiming at accelerating diffusion model inference,
including deterministic sampling \cite{SME21},  
higher-order ODE solvers \cite{Lu22, ZC23}, 
and progressive or consistency distillation \cite{SaH22, SD24, SD23}.
These sampling methods can be applied to diffusion inference in the latent space,
as such, they can be readily integrated into our CSDM framework.

\quad 
Recently, a line of theoretical studies \cite{HWC24, LY24, Pot24} 
explored the diffusion model's capability of adapting to low dimensionality, i.e.,
a diffusion model itself can capture the dataset's low-dimensional structure,
leading to faster convergence, without any dimension-reduction tricks.
Yet, these studies still require model training in the ambient space.
In contrast, the CSDM pipeline proposed here trains the model in the latent space, 
leading to more efficient training;
refer to Section \ref{sc3} for detailed analyses and further discussions.

\quad
Finally, it is worth noting that there are papers in the literature \cite{bora2017compressed, XC24} that apply 
generative models to help efficiently solve the inverse problems that are central 
to compressed sensing.
Our CSDM approach works in the {\it opposite} direction --
making compressed sensing {help} accelerate generation and inference in diffusion models.

\medskip
{\bf Organization of the paper}:
The rest of the paper is organized as follows.
Section \ref{sc2} highlights the background on diffusion models and preliminaries in compressed sensing.
The CSDM approach and its underlying theory are developed in Section \ref{sc3}.
Numerical experiments involving images and financial time series are reported, resepectively,
in Section \ref{sc4} and Section \ref{sc5}.
Concluding remarks are summarized in Section \ref{sc6}.

\section{Preliminaries}
\label{sc2}

\quad This section provides background materials on the two key subjects of the paper,
diffusion models and compressed sensing.

\quad Below we start with highlighting some symbols and notation that will be used throughout this paper.
\begin{itemize}[itemsep = 3 pt]
\item
For $x,y \in \mathbb{R}^d$, $x \cdot y$ denotes the inner product between $x$ and $y$,
and $|x|_p: = (\sum_{i = 1}^d  |x_i|^p)^{1/p}$ is the $p$-norm of $x$.
\item
For a function $f: \mathbb{R}^d \to \mathbb{R}$, let $\nabla f$ denote the gradient of $f$. 
\item
The symbol $\mathcal{N}(\mu, \Sigma)$ denotes the Gaussian distribution with mean $\mu$ and covariance matrix $\Sigma$,
and $\mbox{Unif}\, [a,b]$ denotes the uniform distribution on $[a,b]$.
\item
For $f: \mathbb{R}^d \to \mathbb{R}^m$ and $\mu(\cdot)$ a probability measure on $\mathbb{R}^d$,
the symbol $f_{\#}\mu(\cdot)$ denotes the pushforward of $\mu(\cdot)$ by $f$.
\item
The symbol $a = \mathcal{O}(b)$ or $a \lesssim b$ means that $a/b$ is bounded  as some problem parameter tends to $0$ or $\infty$ (often neglecting the logarithmic factor). 
\end{itemize}

\subsection{Diffusion models}
\label{sc21}

Diffusion models are a class of generative models that learn data distributions by a two-stage procedure:
the {\em forward process} gradually adding noise to data,
and the {\em reversed process} recovering/generating the data distribution $p_{\tiny \mbox{data}}(\cdot)$ from noise.
There are many formulations of diffusion models, 
e.g., by Markov chains \cite{Ho20, SE19},
by stochastic differential equations (SDEs) \cite{Song20},
and by deterministic flows \cite{Lip23, Liu22}.
To provide context, we briefly review the continuous-time formulation by SDEs
that offers a unified framework of diffusion models.

\quad We follow the presentation of \cite{TZ24}.
The forward process is governed by an SDE:
\begin{equation}
\label{eq:forward}
dX_t = f(t, X_t) dt + g(t) dW_t, \quad X_0 \sim p_{\tiny \mbox{data}}(\cdot),
\end{equation}
where $f: \mathbb{R}_+ \times \mathbb{R}^d \to \mathbb{R}^d$, $g: \mathbb{R}_+ \to \mathbb{R}_+$,
and $(W_t)_{t\ge 0}$ is Brownian motion in $\mathbb{R}^d$.
Some conditions are required on $f(\cdot, \cdot)$ and $g(\cdot)$ so that 
the SDE \eqref{eq:forward} is well-defined,
and that $X_t$ has a smooth density $p(t, x):= \mathbb{P}(X_t \in dx)/dx$, 
see \cite{SV79}.
As a specific and notable example, 
 $f(t,x) = -\frac{1}{2}(at + b)x$ and $g(t) = \sqrt{at + b}$ for some $a, b > 0$
 corresponds to the {\em variance preserving} (VP) model \cite{Song20},
 whose discretization yields the most widely used 
 {\em denoising diffusion probabilistic models} (DDPMs) \cite{Ho20}.

\quad The key to the success of diffusion models is that their time reversal $(\widetilde{X}_t)_{0 \le t \le T}$ has a tractable form:
\begin{equation*}
d\widetilde{X}_t = \left(-f(T-t, \widetilde{X}_t) + g^2(T-t) \nabla \log p(T-t, \widetilde{X}_t)\right) dt + g(T-t) dB_t, \quad \widetilde{X}_0 \sim p(T, \cdot),
\end{equation*}
with $(B_t)_{t \ge 0}$ a copy of Brownian motion in $\mathbb{R}^d$ \cite{HP86}.
It is common to replace $p(T, \cdot)$ with a noise  $p_{\tiny \mbox{noise}}(\cdot)$,
which is close to $p(T, \cdot)$ but should {\em not} depend on $p_{\tiny \mbox{data}}(\cdot)$.
All but the term $\nabla \log p(T-t, \widetilde{X}_t)$ are available,
so it comes down to learning $\nabla \log p(t,x)$,
known as {\em Stein's score function}.
Recently developed score-based methods attempt to approximate $\nabla \log p(t,x)$ by 
neural nets $\{s_\theta(t,x)\}_\theta$,
called {\em score matching}.
The resulting reversed process $(Y_t)_{0 \le t \le T}$ is:
\begin{equation}
\label{eq:reverse}
dY_t = \left(-f(T-t, Y_t) + g^2(T-t) s_\theta(T-t, Y_t) \right)dt + g(T-t) dB_t, \quad Y_0 \sim p_{\tiny \mbox{noise}}(\cdot).
\end{equation}
An equivalent (probabilistic) ODE sampler is:
\begin{equation}
\label{eq:reverse2}
dY_t = \left(-f(T-t, Y_t) + \frac{1}{2}g^2(T-t) s_\theta(T-t, Y_t) \right)dt, \quad Y_0 \sim p_{\tiny \mbox{noise}}(\cdot).
\end{equation}
Both \eqref{eq:reverse} and \eqref{eq:reverse2} 
are referred to as the {\em inference processes},
and the implementation requires discretizing these processes.

\quad There are several existing score matching methods, among which the most widely used one is
{\em denoising score matching} (DSM) \cite{Vi11}:
\begin{equation}
\label{eq:DSMcond}
\min_\theta \mathbb{E}_{t \sim \tiny \mbox{Unif}\, [0,T]} \bigg\{\lambda_t \, \mathbb{E}_{X_0\sim p_{data}} \left[ \mathbb{E}_{p(t, \cdot | X_0)}\Big|s_{\theta}(t,X_t)- \nabla \log p(t,X_t | X_0)\Big|_2^2 \right] \bigg\},
\end{equation}
where $\lambda_t$ is a weight function.
The advantage of DSM is that most existing models (e.g., VP) are Gaussian processes of form
$X_t = \alpha_t X_0 + \sigma_t \varepsilon$, 
with  $\varepsilon \sim \mathcal{N}(0,I)$ independent of $X_0$.
By adopting a noise parameterization $\varepsilon_\theta(t, X_t) = -\sigma_t s_\theta(t,X_t)$,
DSM \eqref{eq:DSMcond} reduces to:
\begin{equation}
\label{eq:DSMnoise}
\min_\theta \mathbb{E}_{t \sim \tiny \mbox{Unif}\, [0,T]} \bigg[\frac{\lambda_t}{\sigma_t^2} \, \mathbb{E}_{X_0\sim p_{data}, \varepsilon \sim \mathcal{N}(0,I)}  \left|\varepsilon_{\theta}(t,\alpha_t X_0 + \sigma_t \varepsilon)- \varepsilon\right|_2^2 \bigg].
\end{equation}
Common choices for the weight function are $\lambda_t = \sigma_t^2$ \cite{Song20},
and $\lambda_t = - \sigma_t^2 \left(\log\frac{\alpha_t^2}{\sigma_t^2} \right)'$ \cite{Kingma2021} corresponding to 
the evidence lower bound.
For analytical studies, it is standard to assume a blackbox score matching error:
there is $\epsilon > 0$ such that
\begin{equation}
\label{eq:bberr}
\mathbb{E}_{X \sim p(t, \cdot)}|s_{\theta_*}(t,X) - \nabla \log p(t, X)|_2^2 < \epsilon^2,
\end{equation}
where $\theta_*$ is output from some score matching algorithm (e.g., DSM).
See also \cite{CHZW23, HR24, WHT24} for analysis of score matching errors
based on specific neural network structures. 

\quad It is expected that under suitably good score matching, 
the output $Y_T$ or its discretization of the models \eqref{eq:reverse} and \eqref{eq:reverse2}
is close to $p_{data}(\cdot)$.
To simplify the presentation,
we focus on the VP model.
We need the following result on the $W_2$ convergence of the model.
\begin{theorem}
\label{prop:complexity}
Let $(Y, \widetilde{Y})$ be defined on the same probability space such that $Y \sim p_{data}(\cdot)$,
and $Y'$ is distributed as the output of the VP model.
Assume that 
$p_{data}(\cdot)$ is strongly log-concave, the score $\nabla \log p(t,x)$ is Lipschitz,
and the score matching error \eqref{eq:bberr} holds.
Then:
\begin{enumerate}[itemsep = 3 pt]
\item \cite{GNZ23}
There is a discretization of \eqref{eq:reverse} such that it takes $n_{\tiny \mbox{diff}} = \mathcal{O}(\frac{d}{\epsilon^2})$ steps to 
achieve $|Y - \widetilde{Y}|_2 \le \epsilon$ with high probability.
\item \cite{GZ25}
There is a discretization of \eqref{eq:reverse2} such that it takes $n_{\tiny \mbox{diff}} = \mathcal{O}(\frac{\sqrt{d}}{\epsilon})$ steps to achieve $|Y - \widetilde{Y}|_2 \le \epsilon$ with high probability.
\end{enumerate}
\end{theorem}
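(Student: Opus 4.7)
The plan is to follow the standard three-step decomposition used in the $W_2$ convergence analysis of diffusion samplers, then exploit the strong log-concavity of $p_{\tiny \mbox{data}}(\cdot)$ to obtain contractive Grönwall estimates. For both parts, I would couple the three processes on the same probability space: the true reverse process $(\widetilde{X}_t)$ initialized at $p(T, \cdot)$, the idealized reverse process $(Y_t)$ from \eqref{eq:reverse} or \eqref{eq:reverse2} initialized at $p_{\tiny \mbox{noise}}(\cdot)$, and the discretized scheme $(\overline{Y}_{t_k})$ with step size $h = T/n_{\tiny \mbox{diff}}$. Using synchronous coupling of the Brownian motions and the triangle inequality, write
\begin{equation*}
|Y - \widetilde{Y}|_2 \;\le\; \underbrace{|\widetilde{X}_T - Y_T|_2}_{\text{initialization+score error}} + \underbrace{|Y_T - \overline{Y}_T|_2}_{\text{discretization error}}.
\end{equation*}
Under the VP choice of $f$ and $g$, the forward process contracts exponentially, so $W_2(p(T,\cdot), p_{\tiny \mbox{noise}}(\cdot))$ is exponentially small in $T$, making initialization cheap (take $T = \mathcal{O}(\log(1/\epsilon))$).

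For part (1), the strategy is to control the score-matching contribution via the assumption \eqref{eq:bberr} together with a Grönwall argument: the drift of the reverse SDE decomposes as $-f + g^2\nabla\log p$ (true) versus $-f + g^2 s_{\theta_*}$ (learned), so the squared-norm difference evolves by an SDE whose drift contribution is dominated by $g^2|s_{\theta_*} - \nabla \log p|_2$. Under strong log-concavity and Lipschitz score, the linearized drift is contractive, so the score error contributes a bounded $\mathcal{O}(\epsilon)$ term uniformly in $T$. For the discretization error, the standard per-step analysis for Euler--Maruyama on a semilinear SDE with Lipschitz coefficients yields a local error of order $\sqrt{dh}\cdot h^{1/2}$; summed over $n_{\tiny \mbox{diff}} = T/h$ steps and converted to a tail bound via concentration, this forces $h \lesssim \epsilon^2/d$, i.e.\ $n_{\tiny \mbox{diff}} = \mathcal{O}(d/\epsilon^2)$. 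The passage from expectation bounds to a high-probability statement uses Markov's inequality after rescaling $\epsilon$ by a constant, or a sub-Gaussian tail estimate on the Brownian increments.

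For part (2), the ODE sampler \eqref{eq:reverse2} is deterministic given $Y_0$, so there is no stochastic term to accumulate. This lets us use a higher-order consistency analysis: the local truncation error of a suitable (e.g.\ exponential-integrator or midpoint) discretization is $\mathcal{O}(h^2)$ in the sup-norm of the derivative of the learned drift, whose norm at stationarity scales like $\sqrt{d}$ (since $Y_t$ is approximately $\mathcal{N}(0,I)$ in the VP regime, so $\nabla \log p$ has $\ell^2$-norm of order $\sqrt{d}$). Propagating this through the contractive ODE flow yields a global error $\mathcal{O}(\sqrt{d}\,h)$, and setting this to $\epsilon$ gives $n_{\tiny \mbox{diff}} = \mathcal{O}(\sqrt{d}/\epsilon)$. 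The high-probability step is cheaper here because the only randomness enters through $Y_0 \sim p_{\tiny \mbox{noise}}$, and the flow map is Lipschitz under the hypotheses.

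The main obstacle I anticipate is the discretization analysis in part (2): the naive Euler scheme only gives $\mathcal{O}(d/\epsilon^2)$, so one has to choose the right integrator and carefully track the $\sqrt{d}$-scaling of the score's derivative. In particular, one needs uniform (in $t$) Lipschitz estimates on $\nabla \log p(t,\cdot)$ that do not blow up near $t = T$ — this is delicate for VP but follows from a Bakry--Émery-type propagation of log-concavity along the forward flow. A secondary obstacle is turning the $L^2$-in-expectation score bound \eqref{eq:bberr} into a pathwise error without losing the $\sqrt{d}$ vs $d$ distinction; I would handle this by a change-of-measure argument against the reverse process's law, as in \cite{GNZ23, GZ25}.
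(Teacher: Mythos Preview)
The paper does not give a proof of this theorem. Theorem~\ref{prop:complexity} is stated as a quoted result from the literature: part~(1) is attributed to \cite{GNZ23} and part~(2) to \cite{GZ25}, and the paper simply invokes these bounds as black boxes in the subsequent analysis (Corollary~\ref{coro:main} and the complexity discussion). There is therefore nothing in the paper to compare your proposal against.

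That said, your sketch is a reasonable outline of how the cited works proceed: the three-term decomposition (initialization, score-matching, discretization), the exponential contraction of VP to control the first term, a Gr\"onwall-type argument exploiting strong log-concavity for the second, and a step-size analysis for the third are all standard ingredients in the $W_2$ convergence literature you cite. Your identification of the key difficulty in part~(2) --- that a naive Euler scheme does not give $\mathcal{O}(\sqrt{d}/\epsilon)$ and one needs a more careful integrator together with sharp control of the score's magnitude --- is also accurate, and is precisely where \cite{GZ25} does the work. If you intend to actually carry this out rather than cite it, you should consult those references directly for the discretization scheme and the uniform-in-$t$ Lipschitz estimates, since the paper under review does not supply them.
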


\quad The $W_2$ convergence of other diffusion models, e.g., variance exploding (VE) \cite{Song20},
was also studied in \cite{GNZ23, GZ25, TZ24b}.
See also \cite{Ben24, Chen23b, Chen23, LLT22, LH24, LW23, LY25} for the KL convergence 
under similar assumptions as in Proposition \ref{prop:complexity}.
In another direction, 
\cite{HWC24, LY24, Pot24} explored the adaptivity of diffusion models to (unknown) low dimensionality.
They showed that it takes $\mathcal{O}(\frac{d_{\tiny \mbox{IS}}}{\epsilon^2})$ steps,
where $d_{\tiny \mbox{IS}}$ is the {\em intrinsic dimension}, 
for DDPM to achieve an $\epsilon$ KL-error.
We defer the discussion to Section \ref{sc3}.

\subsection{Compressed sensing}
\label{sc22}

Compressed sensing \cite{CRT06, CT05, CT06, Do06} offers a powerful framework
for the exact recovery of a {\em sparse} signal $x \in \mathbb{R}^d$ from a limited number of observations $y \in \mathbb{R}^m$
with $m \ll d$.
We start by reviewing compressed sensing, following the presentation of \cite{Candes06}.
 
 \smallskip
 {\em Sparse recovery problem}:
Let $x = (x^1, \ldots, x^d)$, 
and assume that its support $T:=\{i: x^i \ne 0\}$ has small cardinality. 
The primary goal is to solve:
\begin{equation}
\label{eq:L0}
\min |x|_{0} \quad \text{subject to} \quad Ax = y.
\end{equation}
Solving this problem is equivalent to finding sparse solutions to an underdetermined system of linear equations,
which is NP-hard \cite{Nat95}.
The key idea of compressed sensing relies on $L^1$ techniques;
that is to transform the problem \eqref{eq:L0} into a linear program:
\begin{equation}
\label{eq:L1}
\min |x|_{1} \quad \text{subject to} \quad Ax = y, 
\end{equation}
which is known as {\em basis pursuit} \cite{CDS98}.

\quad In our application, we do not have exact compressed data $y$.
Instead, we have synthetic generation $\widetilde{y}$
that can be viewed as a measurement with noise.
This scenario fits into {\em robust compressed sensing} \cite{CRT06b}:
$y = Ax + e$,
where $e$ is some unknown perturbation with $|e|_2 \le \sigma$ ($\sigma$ is known).
It is natural to consider the convex program:
\begin{equation}
\label{eq:robustCS}
\min |x|_{1} \quad \text{subject to} \quad |Ax - y|_2 \leq \sigma. 
\end{equation}
In fact, the solution to \eqref{eq:robustCS} recovers a sparse signal with an error at most of the noise level.
To state the result, we need the following notion.
\begin{definition} \cite{CT05}
Let $A$ be the matrix with the finite collection of vectors $(v_j)_{j \in J} \in \mathbb{R}^m$ as columns. 
For each $1 \leq S \leq |J|$, 
we define the $S$-restricted isometry constant $\delta_S$ to be the smallest quantity such that $A_T$ obeys
\begin{equation*}
(1 - \delta_S)|c|_2^2 \leq |A_T c|_2^2 \leq (1 + \delta_S)|c|_2^2,
\end{equation*}
for all subsets $T \subset J$ of cardinality at most $S$, and all real coefficients $(c_j)_{j \in T}$.
\end{definition}

\quad The numbers $\delta_S$ measure how close the vectors $v_j$ behave like an orthonormal system,
but only when restricting to sparse linear combinations involving no more than $S$ vectors. 
The following theorem concerns sparse recovery for robust compressed sensing.
\begin{theorem} \cite{CRT06b}
\label{thm:robustCS}
 Let $S$ be such that $\delta_{3S} + 3\delta_{4S} < 2$. 
 Then for any signal $x$ supported on $T$ with $|T| \leq S$ (referred to as $S$-sparse), and any perturbation $e$ with $|e|_2 \leq \sigma$,
\begin{equation*}
|x_{*} - x|_2 \leq C_S \, \sigma, 
\end{equation*}
where $x_{*}$ is the solution to the problem \eqref{eq:robustCS}, and the constant $C_S$ only depends on $\delta_{4S}$. 
\end{theorem}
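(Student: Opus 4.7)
The plan is to reproduce the classical restricted-isometry argument of Candès--Romberg--Tao, built around two ingredients: the feasibility/optimality of $x_*$, which yields a cone-type constraint on the error vector $h := x_* - x$; and a Bessel-style tail bound obtained by decomposing $h$ into blocks ordered by magnitude and then applying the RIP to the leading block.

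First I would extract the two basic constraints on $h$. Since $|Ax - y|_2 = |e|_2 \leq \sigma$, the true signal $x$ is itself feasible for \eqref{eq:robustCS}, hence $|x_*|_1 \leq |x|_1$. Writing $T$ for the support of $x$ and using that $x$ vanishes on $T^c$, the triangle inequality gives
\begin{equation*}
|x|_1 \;\geq\; |x+h|_1 \;=\; |x_T + h_T|_1 + |h_{T^c}|_1 \;\geq\; |x|_1 - |h_T|_1 + |h_{T^c}|_1,
\end{equation*}
so that the cone constraint $|h_{T^c}|_1 \leq |h_T|_1$ holds. On the data side, a second triangle inequality using that both $x$ and $x_*$ are feasible yields the tube constraint $|Ah|_2 \leq |Ax_* - y|_2 + |y - Ax|_2 \leq 2\sigma$.

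Next I would decompose $T^c$ by magnitudes. Set $T_0 := T$, let $T_1 \subseteq T^c$ index the $3S$ largest entries of $h$ in absolute value, let $T_2$ index the next $3S$, and so on. Then $|T_0 \cup T_1| \leq 4S$ and each $T_j$ with $j \geq 2$ has cardinality at most $3S$. Since every entry of $h_{T_{j+1}}$ is dominated by the average magnitude on $T_j$, one gets $|h_{T_{j+1}}|_2 \leq |h_{T_j}|_1/\sqrt{3S}$; summing over $j \geq 1$ and combining with the cone constraint and Cauchy--Schwarz on $T$ gives
\begin{equation*}
\sum_{j \geq 2}|h_{T_j}|_2 \;\leq\; \frac{|h_{T^c}|_1}{\sqrt{3S}} \;\leq\; \frac{|h_T|_1}{\sqrt{3S}} \;\leq\; \frac{1}{\sqrt{3}}\,|h_T|_2 \;\leq\; \frac{1}{\sqrt{3}}\,|h_{T_0 \cup T_1}|_2.
\end{equation*}

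Finally, writing $A h = A h_{T_0 \cup T_1} + \sum_{j \geq 2} A h_{T_j}$ and combining the tube constraint, the RIP bound $\sqrt{1-\delta_{4S}}\,|h_{T_0 \cup T_1}|_2 \leq |A h_{T_0 \cup T_1}|_2$, and the upper RIP bound $|A h_{T_j}|_2 \leq \sqrt{1+\delta_{3S}}\,|h_{T_j}|_2$ on the tail, one obtains
\begin{equation*}
\sqrt{1-\delta_{4S}}\,|h_{T_0 \cup T_1}|_2 \;\leq\; 2\sigma \;+\; \tfrac{1}{\sqrt{3}}\sqrt{1+\delta_{3S}}\;|h_{T_0 \cup T_1}|_2.
\end{equation*}
The hypothesis $\delta_{3S} + 3\delta_{4S} < 2$ is precisely what is needed to make $\sqrt{1-\delta_{4S}} > \tfrac{1}{\sqrt{3}}\sqrt{1+\delta_{3S}}$, so this inequality can be solved for $|h_{T_0 \cup T_1}|_2 \leq C'_S\,\sigma$; combining with the tail bound above via $|h|_2 \leq |h_{T_0 \cup T_1}|_2 + \sum_{j \geq 2}|h_{T_j}|_2$ yields $|h|_2 \leq C_S\,\sigma$, with $C_S$ depending only on $\delta_{4S}$ (via $\delta_{3S} \leq \delta_{4S}$). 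The one place that requires genuine care, rather than routine triangle inequalities, is the choice of block size in the decomposition: blocks of size $3S$ are exactly what tunes the tail constant to $1/\sqrt{3}$ and matches the stated hypothesis, so I expect bookkeeping of these constants to be the main obstacle.
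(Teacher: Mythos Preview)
The paper does not prove this theorem; it is quoted verbatim from \cite{CRT06b} and used as a black box in the proof of Theorem~\ref{thm:main}. Your proposal is the standard Cand\`es--Romberg--Tao argument from that reference and is correct: the cone and tube constraints are derived exactly as they should be, and your choice of tail blocks of size $3S$ (so that $|T_0\cup T_1|\le 4S$ and the tail constant becomes $1/\sqrt{3}$) is precisely what produces the hypothesis $\delta_{3S}+3\delta_{4S}<2$, as you note. Since there is no in-paper proof to compare against, there is nothing further to contrast.
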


 \smallskip
 {\em Sparse recovery optimization}:
 The task is to solve numerically the optimization problem \eqref{eq:robustCS}.
It is known that this problem can be recast into an unconstrained convex problem:
\begin{equation}
\label{eq:QPCS}
\min \frac{1}{2} |Ax - y|_2^2 + \lambda|x|_{1},
\end{equation}
where the relation between $\lambda$ and $\sigma$ is specified by the Pareto frontier \cite{VF08}.
The problem \eqref{eq:QPCS}, known as Lasso or image deblurring problem,  
can be solved by several iterative algorithms, some of them are presented in \cite{zhao2023surveynumericalalgorithmssolve}.
Here we focus on one of these algorithms, 
{\em Fast Iterative Shrinkage-Thresholding Algorithm} (FISTA) \cite{BT092, BT09}.

\quad In the sequel, we denote  $f(x) := \frac{1}{2}|Ax-y|_2^{2}$ and $g(x):=\lambda |x|_1$.
Note that $\nabla f(x) = A^T(Ax -y)$, 
so 
 \begin{equation}
  \label{eq:lipschitz}
|\nabla f(x) - \nabla f(x')|_{2} 
\leq L |x - x'|_2 \quad \mbox{for all } x,x' \in \mathbb{R}^d,
 \end{equation}
where $L:= \lambda_{\max}(A^T A)$ is the largest eigenvalue of $A^TA$.
Define
\begin{equation*}
Q_L(x,x') := f(x') + \nabla f(x') \cdot (x - x') + \frac{L}{2} |x-x'|_2^2 + g(x),
\end{equation*}
and
\begin{equation}
\begin{aligned}
p_L(x') &= \argmax_x Q_L(x,x') \\
& = \argmax_x \left\{ \frac{L}{2} \left| x - \left(x' - \frac{\nabla f(x')}{L} \right) \right|_2^2+ g(x)\right\} \\
& = \operatorname{SoftThreshold}\left(x' - \frac{\nabla f(x')}{L} , \frac{\lambda}{L} \right),
\end{aligned}
\end{equation}
where the soft-thresholding operator is applied coordinate-wise \cite{CD98, DDD04}:
\begin{equation*}
\operatorname{SoftThreshold}(x,a)_i:= 
\begin{cases}
x_i - a & \text{if } x_i > a,\\
0 & \text{if } |x_i| \le a, \\
x_i + a & \text{if } x_i < - a.
\end{cases}
\end{equation*}
FISTA is a proximal gradient method by incorporating the Nesterov acceleration.
\begin{tcolorbox}
\textbf{Fast Iterative Shrinkage-Thresholding Algorithm (FISTA)}

\textbf{Input:} L (Lipschitz constant of $\nabla f$).

\textbf{Step 0.} Take \( y_1  = x_0 \in \mathbb{R}^d \), \( t_1 = 1 \).

\textbf{Step k.}  Compute
\[
x_k = p_L(y_k),
\]
\[
t_{k+1} = \frac{1 + \sqrt{1 + 4t_k^2}}{2},
\]
\[
y_{k+1} = x_k + \left( \frac{t_k - 1}{t_{k+1}} \right) (x_k - x_{k-1}).
\]
\end{tcolorbox}

\quad The convergence result is as follows.
\begin{theorem} \cite{BT092, BN17}
\label{thm:FISTA}
Let $x_*$ be the solution to the problem \eqref{eq:QPCS},
and $\{x_k\}_{k \ge 0}$ the FISTA iterates.
We have for $k$ sufficiently large,
\begin{equation*}
F(x_k) - F(x^*) \leq \frac{CL}{k^2} \quad \mbox{and} \quad
|x_k - x_*|_2 \le \frac{C(L + |y|_2)}{k}, 
\end{equation*}
for some $C > 0$.
\end{theorem}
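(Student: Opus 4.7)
The plan is to prove the two bounds separately: the $\mathcal{O}(L/k^{2})$ function-value rate by the classical Beck--Teboulle potential argument, and the $\mathcal{O}(1/k)$ iterate rate by a supplementary growth argument tailored to the Lasso structure of \eqref{eq:QPCS}.

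For the function-value bound, I would start from the descent lemma implied by \eqref{eq:lipschitz}, namely $F(p_L(x')) \le Q_L(p_L(x'), x')$ for every $x'$. Combined with the fact that $p_L(x')$ is the minimizer of the $L$-strongly convex majorant $Q_L(\cdot,x')$, this yields the three-point inequality
\[
F(z) - F\bigl(p_L(x')\bigr) \;\ge\; \tfrac{L}{2}\bigl|p_L(x')-x'\bigr|_{2}^{2} + L\bigl(x'-p_L(x')\bigr)\cdot\bigl(z-x'\bigr), \qquad \forall\,z\in\mathbb{R}^{d}.
\]
Applying it at $x'=y_{k+1}$ with $z=x_{k}$ and $z=x_{*}$, combining with weights $t_{k+1}-1$ and $1$, and invoking both the identity $t_{k+1}y_{k+1}-(t_{k+1}-1)x_{k}=t_{k}x_{k}-(t_{k}-1)x_{k-1}$ built into the momentum update and the defining recursion $t_{k+1}^{2}-t_{k+1}=t_{k}^{2}$, one obtains the monotonicity $\Phi_{k+1}\le\Phi_{k}$ of the potential
\[
\Phi_{k} \;:=\; \tfrac{2 t_{k}^{2}}{L}\bigl(F(x_{k})-F(x_{*})\bigr) \;+\; \bigl|\,t_{k}x_{k}-(t_{k}-1)x_{k-1}-x_{*}\,\bigr|_{2}^{2}.
\]
A short induction on $t_{k+1}\ge t_{k}+\tfrac{1}{2}$, starting from $t_{1}=1$, gives $t_{k}\ge (k+1)/2$, whence $F(x_{k})-F(x_{*})\le 2L\,|x_{0}-x_{*}|_{2}^{2}/(k+1)^{2}$, which is the first inequality.

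For the iterate bound I would leverage two facts beyond the potential argument. First, the non-increase of $\Phi_{k}$ keeps the auxiliary sequence $t_{k}x_{k}-(t_{k}-1)x_{k-1}$ within $|x_{0}-x_{*}|_{2}$ of $x_{*}$, and combined with the function-value decay this confines each $x_{k}$ to a bounded region determined by $|x_{0}-x_{*}|_{2}$. Second, the Lasso objective satisfies a quadratic-growth/error-bound inequality $F(x)-F(x_{*})\ge \mu\,\mathrm{dist}(x,\argmin F)^{2}$ on any such bounded region, a consequence of the polyhedrality of $|\cdot|_{1}$ combined with strong convexity of $\tfrac{1}{2}|Ax-y|_{2}^{2}$ restricted to each fixed active support. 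Coupling this with the $\mathcal{O}(L/k^{2})$ function-value rate gives $|x_{k}-x_{*}|_{2}=\mathcal{O}(\sqrt{L}/k)$. Tracking how $x_{*}$ scales through the a priori estimate $\lambda|x_{*}|_{1}\le F(x_{*})\le F(0)=\tfrac{1}{2}|y|_{2}^{2}$ and absorbing the initial-distance term into the problem data then converts the hidden constant into the stated $(L+|y|_{2})$ dependence.

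The main obstacle I anticipate is the iterate bound. The function-value rate is by now the textbook Beck--Teboulle argument, but since $F$ is merely convex (not strongly convex) in general, converting $F(x_{k})-F(x_{*})=\mathcal{O}(L/k^{2})$ into $|x_{k}-x_{*}|_{2}=\mathcal{O}(1/k)$ is not automatic and requires either the Lasso-specific error-bound property or the refined analysis of \cite{BN17}. Keeping the problem-data dependence explicit in the form $L+|y|_{2}$, rather than as an opaque multiple of $|x_{0}-x_{*}|_{2}$, is the second delicate point and demands careful propagation of constants through the potential.
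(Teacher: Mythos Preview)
The paper does not supply its own proof of this theorem; it is quoted as a known result from the cited references (Beck--Teboulle for the function-value rate, and \cite{BN17} for the iterate rate). So there is no in-paper argument to compare against, and your proposal stands on its own.

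Your sketch is essentially correct and follows the standard route. The potential $\Phi_k$, its monotonicity via the three-point inequality and the momentum identity, and the induction $t_k\ge(k+1)/2$ together give exactly the Beck--Teboulle $\mathcal{O}(L/k^2)$ function-value rate. For the iterate rate you correctly identify the real difficulty: $F$ is only convex, so one cannot pass from $F(x_k)-F(x_*)=\mathcal{O}(1/k^2)$ to $|x_k-x_*|_2=\mathcal{O}(1/k)$ without extra structure. Invoking the Lasso error-bound/quadratic-growth property on the bounded region containing the iterates is the right mechanism and is precisely what results in the spirit of \cite{BN17} exploit.

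The one soft spot is the final constant-tracking step. The quadratic-growth modulus $\mu$ for Lasso depends on $A$, $\lambda$, and the active support in ways that do not obviously collapse to a clean $L+|y|_2$ factor; your conversion of $\sqrt{L/\mu}\,|x_0-x_*|_2$ into $C(L+|y|_2)$ is more heuristic than rigorous as written. Since the paper only uses the theorem to extract the iteration count $n_{\text{CS}}=\mathcal{O}(L/\epsilon)$ immediately afterward, this level of precision is not load-bearing, but if you were writing a self-contained proof you would need to be more careful there.
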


\quad To ensure that $|x_k - x_*|_2 \leq \epsilon$,
it requires the number of iterations
$n_{\tiny \mbox{CS}} = \mathcal{O}\left( \frac{L}{\epsilon}\right) =  \mathcal{O}(\frac{s^2_{\max}(A)}{\epsilon})$,
where $s_{\max}(A)$ is the largest singular value of $A$.
Also refer to \cite{JM15, TBZ16} for sharper convergence results of FISTA (but implicit in the dimension dependence),
and \cite{Al19, CL18} for variants of FISTA.

\section{Main results}
\label{sc3}

\quad In this section, we develop the methodology by combining diffusion models with compressed sensing for sparse signal/data generation,
and provide theoretical insights.
As mentioned in the introduction, 
the idea is to compress the data into a lower dimension space, 
where a diffusion model is employed to generate samples more efficiently.
Compressed sensing is then used to convert the generated samples in the latent space
to the original signal/data space.
Our algorithm is summarized as follows.
\begin{tcolorbox}
\textbf{Compressed Sensing + Diffusion models (CSDM)}

\textbf{Input:} $A \in \mathbb{R}^{m \times d}$ (sketch matrix, $m \ll d$).

\textbf{Step 1.} Apply linear sketch to compress the data $p_{\tiny \mbox{data}}(\cdot)$ in $\mathbb{R}^d$
into $\widetilde{p}_{\tiny \mbox{data}}(\cdot): = A_{\#}p_{\tiny \mbox{data}}(\cdot)$ in $\mathbb{R}^m$.

\textbf{Step 2.}  Train a diffusion model using the data points drawn from $\widetilde{p}_{\tiny \mbox{data}}(\cdot)$.

\textbf{Step 3.}  Apply FISTA to solve the problem \eqref{eq:QPCS},
with $y$ generated by the diffusion model trained in Step 2.
\end{tcolorbox}

\quad While CSDM can be applied to any target data,
it is mostly efficient for generating sparse data distribution in the regime of compressed sensing.
The following theorem provides a theoretical guarantee for the use of CSDM in data generation.
\begin{theorem}
\label{thm:main}
Let $(x, \widetilde{y})$ be defined on the same probability space such that
$x \sim p_{\tiny \mbox{data}}(\cdot)$,
and $\widetilde{y}$ is output by the diffusion model in Algorithm CSDM.
Assume that $|Ax - \widetilde{y}|_2 \le \sigma$ with high probability.
Also let the assumptions in Theorem \ref{thm:robustCS} hold
(i.e., $p_{\tiny \mbox{data}}(\cdot)$ enjoys $S$-sparsity and $A$ satisfies the restricted isometry property).
For $\{x_k\}_{k \ge 0}$ the FISTA iterates relative to $\widetilde{y}$,
we have with high probability, 
\begin{equation}
|x_k - x|_2 \le C\left(\sigma + \frac{s^2_{\max}(A) + \sqrt{S}}{k} \right), \quad \mbox{for } k \mbox{ sufficiently large},
\end{equation}
where $s_{\max}(A)$ is the largest singular value of $A$.
\end{theorem}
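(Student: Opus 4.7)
The plan is to prove the theorem by a two-piece triangle inequality: bound the distance from the FISTA iterate to the Lasso minimizer, bound the distance from the Lasso minimizer to the true signal, and sum. Let $x_*$ denote the minimizer of the convex program \eqref{eq:QPCS} (equivalently \eqref{eq:robustCS}) when the noisy measurement is $\widetilde{y}$. I would then write
\begin{equation*}
|x_k - x|_2 \;\le\; |x_k - x_*|_2 + |x_* - x|_2,
\end{equation*}
and handle the two pieces independently.

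For the statistical piece $|x_* - x|_2$, I would apply Theorem \ref{thm:robustCS}. By the hypothesis, $e := \widetilde{y} - Ax$ satisfies $|e|_2 \le \sigma$ on a high-probability event, and by assumption $x$ is $S$-sparse and $A$ satisfies the RIP condition $\delta_{3S} + 3\delta_{4S} < 2$. Theorem \ref{thm:robustCS} then immediately gives $|x_* - x|_2 \le C_S \sigma$ on that event, which accounts for the $\sigma$ term in the bound.

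For the optimization piece $|x_k - x_*|_2$, I would invoke Theorem \ref{thm:FISTA} with $L = \lambda_{\max}(A^TA) = s_{\max}^2(A)$ applied to the FISTA iterates driven by $\widetilde{y}$, yielding
\begin{equation*}
|x_k - x_*|_2 \;\le\; \frac{C\bigl(s_{\max}^2(A) + |\widetilde{y}|_2\bigr)}{k}.
\end{equation*}
The key remaining task is to turn $|\widetilde{y}|_2$ into the advertised $\sqrt{S}$. I would use the triangle inequality $|\widetilde{y}|_2 \le |Ax|_2 + \sigma$, then apply RIP to the $S$-sparse signal $x$ to get $|Ax|_2 \le \sqrt{1+\delta_S}\,|x|_2$. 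Since $x$ is $S$-sparse with bounded entries (implicit in the strong log-concavity / bounded-support setting for $p_{\text{data}}$), $|x|_2 \lesssim \sqrt{S}$, so $|\widetilde{y}|_2 \lesssim \sqrt{S} + \sigma$. Combining this with the previous displays and absorbing the $\sigma/k$ term into $\sigma$ for $k$ large finishes the proof:
\begin{equation*}
|x_k - x|_2 \;\le\; C_S \sigma + \frac{C\bigl(s_{\max}^2(A) + \sqrt{S} + \sigma\bigr)}{k} \;\le\; C\!\left(\sigma + \frac{s_{\max}^2(A) + \sqrt{S}}{k}\right).
\end{equation*}

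The main obstacle, and the step that requires the most care, is the $\sqrt{S}$ factor: it is not produced directly by FISTA or by compressed sensing, but arises from combining RIP with the implicit magnitude control on the $S$-sparse signal $x$. Everything else is a clean concatenation of Theorem \ref{thm:robustCS} and Theorem \ref{thm:FISTA}; one should also verify that the two high-probability events (the noise bound $|Ax - \widetilde{y}|_2 \le \sigma$ from the diffusion output and the RIP guarantee) can be intersected without degradation, which is standard by a union bound.
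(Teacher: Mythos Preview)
Your proposal is correct and follows essentially the same route as the paper's proof: a triangle-inequality split $|x_k - x|_2 \le |x_k - x_*|_2 + |x_* - x|_2$, Theorem~\ref{thm:robustCS} for the statistical piece, Theorem~\ref{thm:FISTA} for the optimization piece, and the estimate $|\widetilde{y}|_2 \le |Ax|_2 + \sigma$ with $|Ax|_2 = \mathcal{O}(\sqrt{S})$ under RIP. Your justification of the $\sqrt{S}$ factor is in fact slightly more explicit than the paper's, which simply asserts that $|Ax|_2$ is of order $\sqrt{S}$ under the assumptions of Theorem~\ref{thm:robustCS}.
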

\begin{proof}
Let $x_*$ be the solution to the problem:
\begin{equation*}
\min |x|_{1} \quad \text{subject to} \quad |Ax - \widetilde{y}|_2 \leq \sigma. 
\end{equation*}
By Theorem \ref{thm:robustCS},
we have $|x - x_*|_2 \le C \sigma$ for some $C > 0$.
Further by Theorem \ref{thm:FISTA}, 
we have for $k$ sufficiently large,
\begin{equation*}
|x_k - x_*|_2  \le \frac{C(L+|\widetilde{y}|_2)}{k} \le \frac{C(L + \sigma + |Ax|_2)}{k}.
\end{equation*}
Under the assumption of Theorem \ref{thm:robustCS},
the term $|Ax|_2$ is of order $\mathcal{O}(\sqrt{S})$. 
Thus, we get $|x_k - x_*|_2  \le \frac{C'(L + \sigma + \sqrt{S})}{k}$ 
for some $C' > 0$ and for $k$ sufficiently large.
By triangle inequality, 
we have $|x_k - x|_2 \le |x_k - x_*|_2 + |x_* - x|_2$,
which yields the desired result.
\end{proof}

\quad Specializing to the VP model leads to the following corollary.
\begin{corollary}
\label{coro:main}
Let the assumptions in Theorem \ref{prop:complexity} and Theorem \ref{thm:main} hold,
with $\widetilde{y}$ be the output of the discretized VP model in $k'$ steps,
and $\{x_{k',k}\}_{k \ge 0}$ be the FISTA iterates as to $\widetilde{y}$.
Then:
\begin{enumerate}[itemsep = 3 pt]
\item
Using the stochastic sampler \eqref{eq:reverse},
we have for $k, k'$ sufficiently large,
\begin{equation}
\label{eq:stocbd}
|x_{k,k'} - x|_2 \le C\left(\sqrt{\frac{m}{k'}} + \frac{s^2_{\max}(A) + \sqrt{S}}{k}\right), \quad \mbox{for some } C > 0.
\end{equation}
\item
Using the deterministic sampler \eqref{eq:reverse2},
we have for $k, k'$ sufficiently large,
\begin{equation}
\label{eq:deterbd}
|x_{k,k'} - x|_2 \le C\left(\frac{\sqrt{m}}{k'} + \frac{s^2_{\max}(A)+ \sqrt{S}}{k}\right), \quad \mbox{for some } C > 0.
\end{equation}
\end{enumerate}
\end{corollary}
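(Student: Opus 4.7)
The plan is to compose the diffusion-inference error in the latent space $\mathbb{R}^m$ (Theorem \ref{prop:complexity}) with the compressed-sensing recovery bound in $\mathbb{R}^d$ (Theorem \ref{thm:main}). The key observation is that Step~2 of CSDM trains a VP diffusion model on the compressed distribution $\widetilde{p}_{\tiny \mbox{data}}(\cdot) = A_{\#}p_{\tiny \mbox{data}}(\cdot)$, so its output $\widetilde{y}$ is produced by the sampler analyzed in Theorem \ref{prop:complexity} with the ambient dimension replaced by the latent dimension~$m$. Consequently, $|Ax-\widetilde{y}|_2$ is exactly the quantity that the diffusion bound controls, and it plays the role of the noise level $\sigma$ in Theorem \ref{thm:main}.

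First I would couple $(x,\widetilde{y})$ on a common probability space so that the pair $(Ax,\widetilde{y})$ realizes the $W_2$-optimal coupling between $\widetilde{p}_{\tiny \mbox{data}}$ and the law of the $k'$-step discretized sampler in $\mathbb{R}^m$. Applying Theorem \ref{prop:complexity} with dimension $m$ then yields, with high probability,
\begin{equation*}
|Ax-\widetilde{y}|_2 \;\le\;
\begin{cases}
C\sqrt{m/k'} & \text{for the stochastic sampler \eqref{eq:reverse},}\\[2pt]
C\sqrt{m}/k' & \text{for the deterministic sampler \eqref{eq:reverse2}.}
\end{cases}
\end{equation*}
Substituting this bound as $\sigma$ into the conclusion of Theorem \ref{thm:main} (whose proof goes through verbatim, since the FISTA contribution $(s^2_{\max}(A)+\sqrt{S})/k$ depends only on $A$, $S$ and $k$, not on the particular $\widetilde{y}$), the triangle inequality then delivers \eqref{eq:stocbd} and \eqref{eq:deterbd} respectively. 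A small union bound over the restricted isometry event for $A$, the score-matching event, and the sampler concentration event ensures that all three high-probability statements hold simultaneously.

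The main subtlety is to ensure that the hypotheses of Theorem \ref{prop:complexity} genuinely apply to $\widetilde{p}_{\tiny \mbox{data}}$ rather than to $p_{\tiny \mbox{data}}$: one needs strong log-concavity of the pushforward and a Lipschitz score for its VP forward perturbation. When $A\in\mathbb{R}^{m\times d}$ has full row rank (as in a Gaussian sketch satisfying RIP), the pushforward of a strongly log-concave density is itself strongly log-concave, with constant controlled by $\lambda_{\min}(AA^\top)$; score Lipschitzness then follows from the linear-Gaussian structure $X_t = \alpha_t X_0 + \sigma_t\varepsilon$ and standard heat-kernel smoothing. The statement bundles these transfer properties into "the assumptions in Theorem \ref{prop:complexity}," so formally the remaining work is bookkeeping, but making the regularity transfer quantitative in the sketch $A$ (so that the implicit constant in $C$ above is uniform in $m$) is the one place where care is needed.
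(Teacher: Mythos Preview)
Your proposal is correct and matches the paper's intended argument: the corollary is stated without proof in the paper, being an immediate specialization of Theorem~\ref{thm:main} with the noise level $\sigma$ supplied by Theorem~\ref{prop:complexity} applied in the latent dimension~$m$. Your additional care about transferring the log-concavity and score-Lipschitz hypotheses to the pushforward $\widetilde{p}_{\tiny\mbox{data}}$ is more than the paper provides---it simply bundles these into ``the assumptions in Theorem~\ref{prop:complexity}''---so your treatment is, if anything, more thorough.
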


\quad Several remarks are in order:

(a) It is common to choose the sketch matrix $A \in \mathbb{R}^{m \times d}$ to be random,
e.g., each entry of $A$ is a Gaussian variable with mean $0$ and variance $\frac{1}{m}$.
By extreme value theory of random matrices \cite{RV10},
the largest singular value 
\begin{equation*}
s_{\max}(A) \lesssim \sqrt{\frac{d}{m}} \quad \mbox{with high probability}.
\end{equation*}
Thus, the Lipschitz constant $L = s^2_{\max}(A)$ is of order $\frac{d}{m}$.
Replacing $s^2_{\max}(A)$ with $\frac{d}{m}$ in \eqref{eq:stocbd}-\eqref{eq:deterbd} yields:
\begin{equation}
\label{eq:rdbd}
|x_{k,k'} - x|_2 \lesssim
\begin{cases}
\sqrt{\frac{m}{k'}} + \frac{1}{k} (\frac{d}{m} + \sqrt{S}) & \text{for the stochastic sampler},\\
\frac{\sqrt{m}}{k'} + \frac{1}{k} (\frac{d}{m} + \sqrt{S})   & \text{for the  deterministic sampler}.
\end{cases}
\end{equation}

(b) The two terms in the bounds \eqref{eq:stocbd}, \eqref{eq:deterbd} and \eqref{eq:rdbd}
correspond to the {\em diffusion sampling error} and the {\em compressed sensing optimization error}.
As mentioned in the introduction,
a tradeoff between these two errors leads to an optimal choice of $m$ -- the compressed data dimension.
Let's take the stochastic sampler of the VP model for example.
In order to get $|x_{k,k'} - x|_2 \le \epsilon$,
it requires:
\begin{equation*}
k' = \mathcal{O}\left( \frac{m}{\epsilon^2}\right) \quad \mbox{and} \quad
k = \mathcal{O}\left( \left(\frac{d}{m } + \sqrt{S}\right) \frac{1}{\epsilon}\right)
\end{equation*}
Also assume that in each iteration,
the computational cost of diffusion sampling is comparable to that of compressed sensing optimization \footnote{A subtlety is that score evaluations for diffusion inference are typically performed on modern GPUs,
while the sparse recovery via FISTA is usually conducted on CPUs. 
For most diffusion inference tasks, 
each denoising step takes $10$-$100$ ms.
On a CPU, an arithmetic operation takes typically $5$-$10$ ns,
and the values of $md$ range from $10^6$-$10^7$ in our experiments:
FISTA's per-iteration cost is of order $5$-$100$ ms. 
So it is reasonable to assume that the per-step cost in diffusion inference is comparable to FISTA's per-iteration cost.
The experiments in Section \ref{sc5} also show that the running time of FISTA is lightweight compared to the diffusion inference time.}.
Under this hypothesis, the complexity that combines sampling and optimization is of order:
\begin{equation}
\label{eq:compl}
\max\left(m, \frac{d}{m} + \sqrt{S}\right).
\end{equation}
Consider the very sparse case $S = \mathcal{O}(1)$.
Optimizing \eqref{eq:compl} with respect to $m$ yields  $m = \mathcal{O}(\sqrt{d})$,
with the resulting complexity $\mathcal{O}(\sqrt{d})$.
Similarly, for the deterministic sampler of the VP model, the optimal $m = \mathcal{O}(d^{\frac{2}{3}})$,
with the resulting complexity $\mathcal{O}(d^{\frac{1}{3}})$.

\smallskip
(c) Theorem \ref{thm:main} is flexible to support different sampling schemes and optimization algorithms.
Also assume that $S = \mathcal{O}(1)$.
Table \ref{table}  below summarizes the optimal $m$ and corresponding complexity 
under various sampling methods with FISTA for compressed sensing.

\begin{table}[h]
\begin{center}
    \begin{tabular}{| c | c | c | c | c |}
    \hline
    Sampling & VP (Deterministic) & VP (Stochastic)  & VE (Deterministic) & VE (Stochastic)  \\ \hline
    $m$ & $d^{\frac{2}{3}}$& $d^{\frac{1}{2}}$ & $d^{\frac{2}{5}}$ & $d^{\frac{2}{3}}$   \\ \hline
    Complexity  & $d^{\frac{1}{3}}$ & $d^{\frac{1}{2}}$ & $d^{\frac{3}{5}}$ & $d^{\frac{1}{3}}$  \\ \hline
    \end{tabular}\\
    
    \medskip
    \caption{Optimal $m$ and the corresponding complexity under different sampling schemes and FISTA for compressed sensing.}
    \label{table}
    \end{center}
 \end{table}
 
\quad There are also other (provable) optimization algorithms for solving compressed sensing \eqref{eq:QPCS}.
For instance, {\em iteratively reweighted least squares} (IRLS) \cite{CWB08, RY08, GR02} 
was proved to achieve the computational complexity $\mathcal{O}\left(\frac{d}{\sqrt{m}}\right)$ \cite{Ku21}.
So for the VP model,
the stochastic sampler with IRLS for compressed sensing yields
the optimal $m = d^{\frac{2}{3}}$ and the complexity $\mathcal{O}(d^{\frac{1}{3}})$;
and the deterministic sampler with IRLS for compressed sensing yields
the optimal $m = \mathcal{O}(d)$ and the complexity $\mathcal{O}(d^{\frac{1}{2}})$.

\smallskip
(d) As mentioned earlier, 
a recent line of works \cite{HWC24, LY24, Pot24} 
studied the diffusion model's capability of adapting to low dimensionality.
It was shown that the complexity (in KL) of DDPM, 
a version of the stochastic sampler of the VP model, is
$\mathcal{O}(d_{\tiny \mbox{IS}})$,
where $d_{\tiny \mbox{IS}}$ is the intrinsic dimension
defined as the logarithm of the data's metric entropy.
Under the $S$-sparsity assumption, it is known \cite{Ver09} that
\begin{equation}
d_{\tiny \mbox{IS}}  = \mathcal{O}(S \log d).
\end{equation}
This yields the complexity $\mathcal{O}(S \log d)$ for DDPM in KL divergence.

\quad On the other hand,
it requires $m \gtrsim S$ to ensure that $A$ satisfies the $S$-restricted isometry property.
It then follows from \eqref{eq:compl}:
\begin{equation}
\mbox{the complexity of CSDM} = 
\begin{cases}
\mathcal{O}(S) & \text{if } S \gtrsim \sqrt{d},\\
\mathcal{O}(\sqrt{d}) & \text{if } S \lesssim \sqrt{d}.
\end{cases}
\end{equation}
If the results of \cite{HWC24, LY24, Pot24} also hold in $L^2$ norm,
then our proposed CSDM achieves the same complexity as theirs when $S \gtrsim \sqrt{d}$.
Note that sharper bounds on the FISTA convergence (e.g., independent of $S$)
will lead to a better complexity for CSDM than that of a diffusion model alone.
Moreover, diffusion models are typically easier to train in low-dimensional spaces than in high-dimensional settings.
We also mention the work \cite{CHZW23},
which proposed to project data onto a low-dimensional space for efficient score matching,
as opposed to direct generation.

\section{Numerical experiments on images}
\label{sc4}

\quad Here we conduct numerical experiments on various sparse image datasets, 
including handwritten digits (MNIST), medical images (OCTMNIST), and climate images (ERA5 Reanalysis).
Generating such data plays an important role in advancing further analytical methodologies across domains such as supply chain logistics, healthcare, and energy systems.

\quad Due to the inherently low resolution of publicly available datasets (e.g., MNIST and OCTMNIST), 
we adopt a {\em resolution upscaling strategy}:
all images are resized to larger spatial dimensions,
while preserving their inherent sparsity structure.
This ensures that the dimensionality is sufficiently high to corroborate our proposed CSDM framework. 
Upscaling is applied only to form the ambient dimension $d$ for time comparisons, 
whereas the compressed dimension $m$ is fixed across all $d$.
Take the MNIST dataset for instance:
we fix the sketch matrix $A \in \mathbb{R}^{m \times d}$ across all experiments for each $d \in \{32 \times 32, 40 \times 40, 48 \times 48\}$, where $m = 28^2 = 784$ is the compressed dimension.
This allows us to evaluate our method under varying degrees of compression,
corresponding to $77\%$, $49\%$, and $34\%$ respectively.

\quad In our proposed pipeline, the total generation time per image consists of:
\begin{itemize}[itemsep = 3 pt]
\item 
{\em Diffusion inference time} $T^{(m)}_{\text{diff}}$: 
the inference time of the diffusion model in $\mathbb{R}^m$. 
\item 
{\em Recovery time} $T^{(m,d)}_{\text{CS}}$: the time required for compressed sensing  ($\mathbb{R}^m \to  \mathbb{R}^d$) via FISTA.
\end{itemize}
So the total generation time of our algorithm is 
$T_{\text{total}} = T^{(m)}_{\text{diff}} + T^{(m,d)}_{\text{CS}}$.
Our goal is to measure the speedup over the baseline, 
which is to perform diffusion inference directly in $\mathbb{R}^d$.
Here we adopt the stochastic sampler,
so $T_{\text{diff}}^{(d)} \approx \frac{d}{m} \cdot T_{\text{diff}}^{(m)}$
(see Theorem \ref{prop:complexity}).
Then, the speedup is computed as:
\begin{equation}
\text{Speedup} = 1 - \frac{T_{\text{total}}}{T_{\text{diff}}^{(d)}} 
= 1 - \left( \frac{m}{d} + \frac{m}{d}  \frac{T_{\text{CS}}^{(m,d)}}{T_{\text{diff}}^{(m)}} \right).
\end{equation}

\subsection{Results on MNIST}

The MNIST dataset \cite{Lecun02} consists of images of handwritten digits, 
and on average, over $80 \%$ of the pixels in each image have intensity values equal or very close to zero.
As mentioned, we resize the images to the ambient resolutions $d \in \{32 \times 32, 40 \times 40, 48 \times 48\}$,
and fix the compressed dimension at $m = 28 \times 28$. 
We train a VP model in $\mathbb{R}^m$ for diffusion inference,
and decode the generated sample to $\mathbb{R}^d$ by FISTA.

\quad Table \ref{tab:mnist-time} reports per-image wall-clock for
(i) diffusion inference in $\mathbb{R}^m$ and 
(ii) FISTA recovery in $\mathbb{R}^d$,
along with the speedup.
As the ambient dimension $d$ increases (or the retention $m/d$ drops),
the diffusion inference time in the latent space $\mathbb{R}^m$ stays roughly constant 
with the recovery adding a small overhead,
while the diffusion inference in the ambient space grows with $d$.
This leads to increasing net speedups (from $4.39\%$ up to $61.13\%$).
\begin{table}[h]
    \centering
    \scalebox{0.8}{%
    \begin{tabular}{cccccc}
        \toprule
        Compression & Original Dim. & Original Dim. Inference Time & Low Dim. Inference Time & Recovery Time & Speedup \\
        \midrule
        76\% & 1024 $\to$ 784 & 0.4463s / pic & 0.3417s / pic & 0.0852s / pic & \textbf{4.39\%} \\
        49\% & 1600 $\to$ 784 & 1.1103s / pic & 0.5441s / pic & 0.0741s / pic & \textbf{44.32\%} \\
        34\% & 2304 $\to$ 784 & 1.5987s / pic & 0.5440s / pic & 0.0774s / pic & \textbf{61.13\%} \\
        \bottomrule
    \end{tabular}
    }
    \vspace{0.5em}
    \caption{Comparison of generation time on MNIST}
    \label{tab:mnist-time}
\end{table}

\quad Figure \ref{fig:mnist-compression} illustrates CSDM generations
at each compression level. 
With low compression/high retention ($76\%$),
digits are crisp and legible with thin strokes largely intact.
But with high compression/low retention ($34\%$), 
we observe a higher background grain and occasional breaks in tight curves,
with loop digits ($0/6/8$) and multi-segment (5) the first to degrade.
Nevertheless, class identity remains visible in most samples,
with the strong speedup at this compression.
Overall, CSDM achieves substantial wall-clock savings while preserving digit identity over a wide range of compression;
artifacts concentrate in thin/curved strokes at aggressive compression.
\begin{figure}[h]
  \centering
  \begin{subfigure}[t]{0.32\linewidth}
    \centering
    \includegraphics[width=\linewidth]{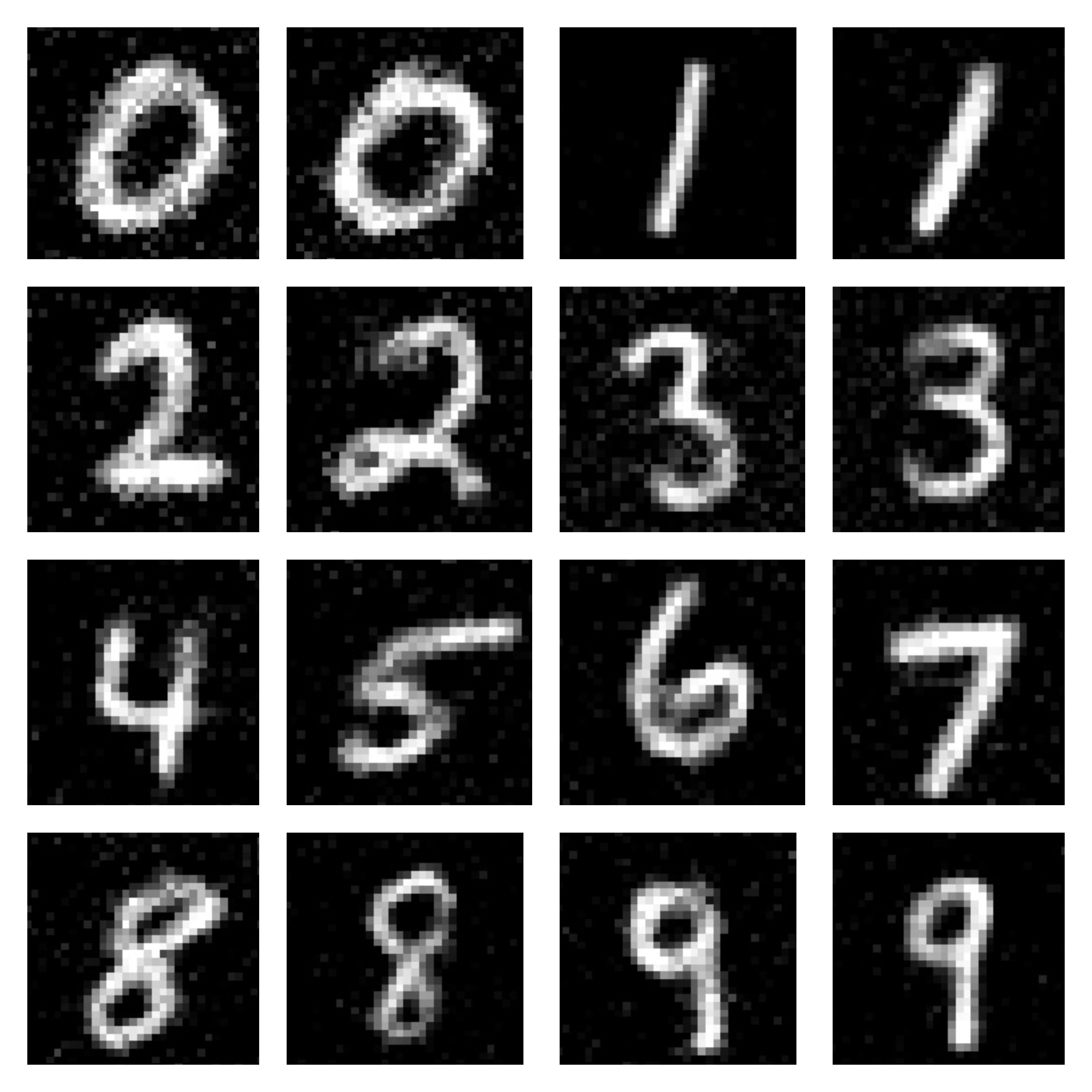}
    \subcaption{76\% dimensions retained}
  \end{subfigure}\hfill
  \begin{subfigure}[t]{0.32\linewidth}
    \centering
    \includegraphics[width=\linewidth]{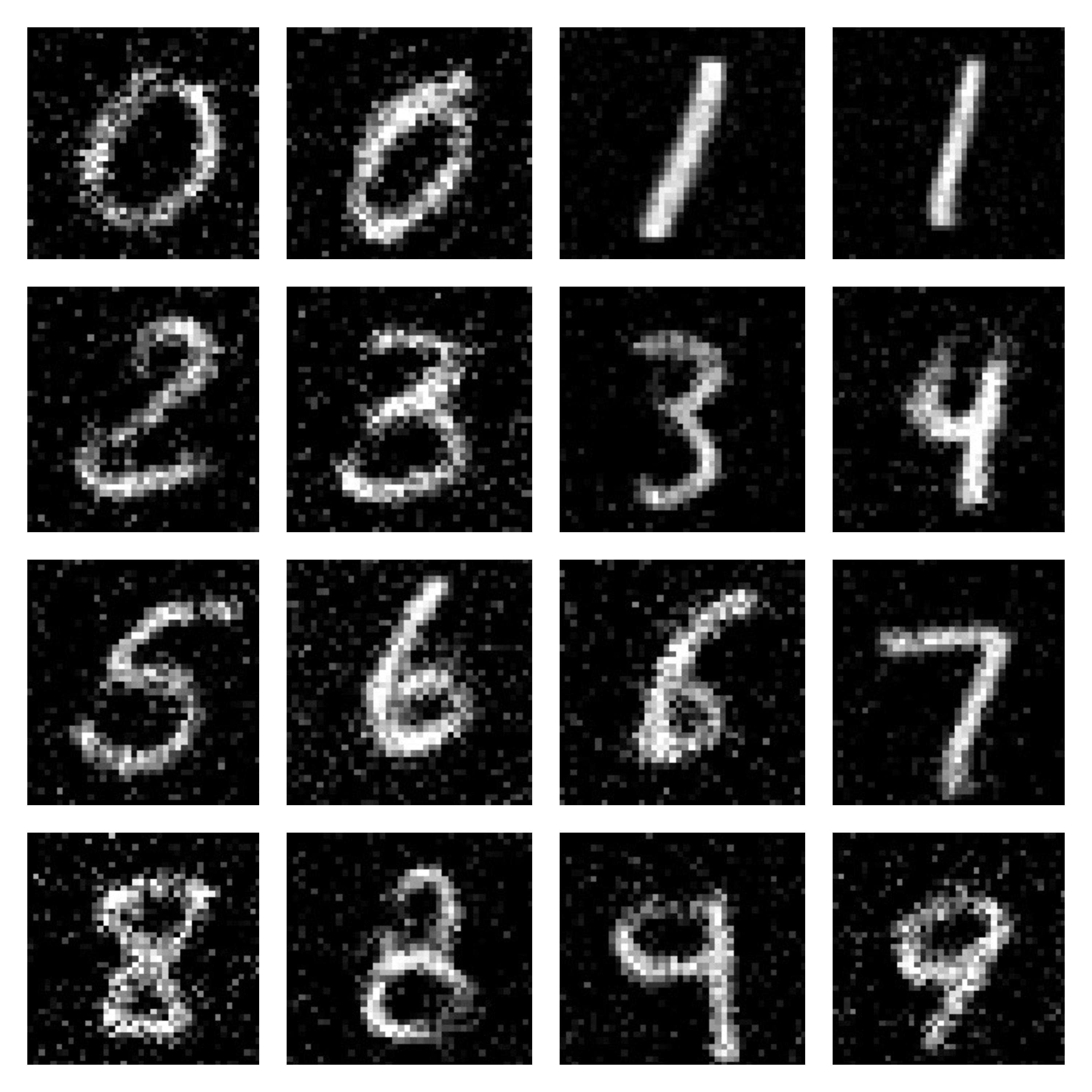}
    \subcaption{49\% dimensions retained}
  \end{subfigure}\hfill
  \begin{subfigure}[t]{0.32\linewidth}
    \centering
    \includegraphics[width=\linewidth]{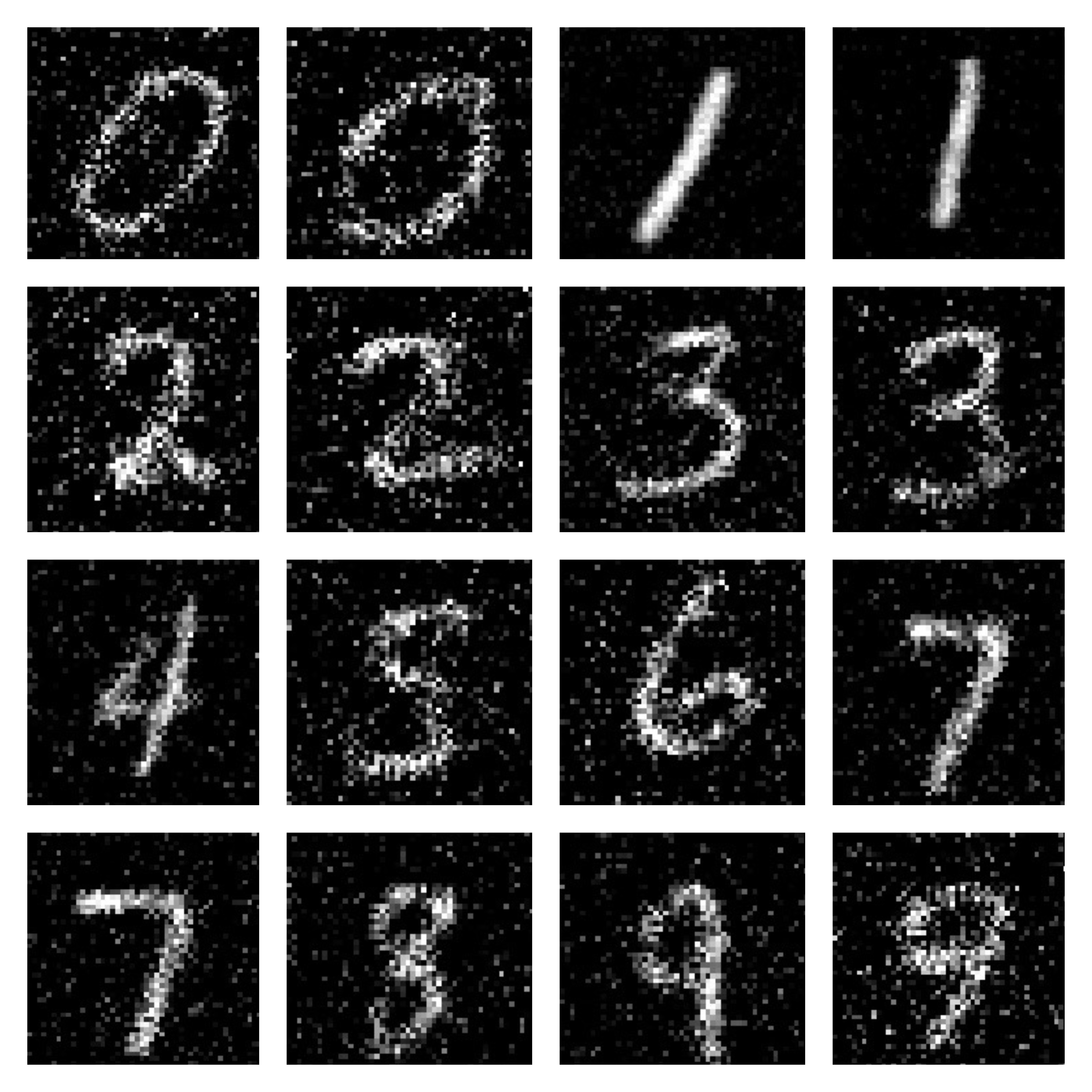}
    \subcaption{34\% dimensions retained}
  \end{subfigure}

  \caption{MNIST generations at three compression levels.}
  \label{fig:mnist-compression}
\end{figure}

\subsection{Results on OCTMNIST}

OCTMNIST contains retinal OCT B-scans from the MedMNIST collection \cite{YS23}.
Unlike handwritten digits, medical images are generally less sparse.
However, OCT exhibits banded anatomy: 
most diagnostic content concentrates in a narrow horizontal band (retinal layers) near the upper or middle part of the frame;
while large regions, especially the lower half, are near-zero background (see Figure \ref{fig:oct} for illustration).
This induces substantial spatial sparsity, which is around $65–70\%$ near-zero pixels at native resolution.
\begin{figure}[h]
    \centering
    \includegraphics[width=0.35\linewidth]{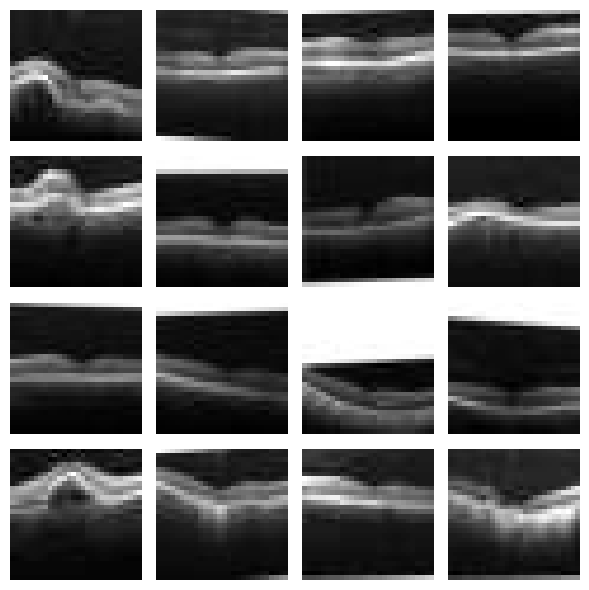}
    \caption{Original OCTMNIST samples}
    \label{fig:oct}
\end{figure}

\quad We follow the same setup as in MNIST.
Table \ref{tab:oct-time} reports per-image wall-clock for 
diffusion inference in $\mathbb{R}^m$, 
FISTA recovery in $\mathbb{R}^d$,
along with the speedup.
The result is similar to MNIST:
as the dimension $d$ increases,
the diffusion inference time in the latent space stays roughly constant  
with the recovery adding a small overhead, 
which yields larger net time savings.
\begin{table}[h]
    \centering
    \scalebox{0.8}{%
    \begin{tabular}{cccccc}
        \toprule
        Compression & Original Dim. & Original Dim. Inference Time & Low Dim. Inference Time & Recovery Time & Speedup \\
        \midrule
        76\% & 1024 $\to$ 784 & 1.6465s / pic & 1.2606s / pic & 0.1519s / pic & \textbf{4.99\%} \\
        49\% & 1600 $\to$ 784 & 1.9243s / pic & 0.9429s / pic & 0.1541s / pic & \textbf{42.99\%} \\
        34\% & 2034 $\to$ 784 & 2.8735s / pic & 1.1076s / pic & 0.1556s / pic & \textbf{56.04\%} \\
        \bottomrule
    \end{tabular}
    }
    \vspace{0.5em}
    \caption{Comparison of generation time on OCTMNIST}
    \label{tab:oct-time}
\end{table}

\quad Figure \ref{fig:oct-compression} shows CSDM generations at different compression levels.
With low compression/high retention ($76\%$),
the retinal band is continuous and well localized; intra-band texture appears with mild grain, and the background remains largely quiescent. Layer transitions are visible, with only light speckle around boundaries.
With high compression/low retention ($34\%$),
the band stays recognizable and contiguous, yet shows higher intra-band speckle and occasional softening at sharp transitions; background grain is more pronounced.
Overall, CSDM preserves the banded retinal anatomy,
while delivering substantial wall-clock savings. 
Artifacts concentrate as mild speckle and slight softening within the band 
at aggressive compression, but the decision-relevant structure (e.g., band continuity and localization) remains clear across settings.
\begin{figure}[h]
  \centering
  \begin{subfigure}[t]{0.32\linewidth}
    \centering
    \includegraphics[width=\linewidth]{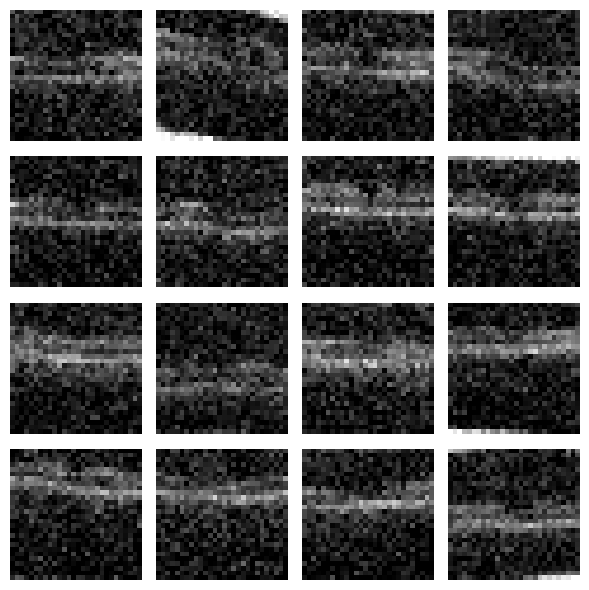}
    \subcaption{77\% dimensions retained}
  \end{subfigure}\hfill
  \begin{subfigure}[t]{0.32\linewidth}
    \centering
    \includegraphics[width=\linewidth]{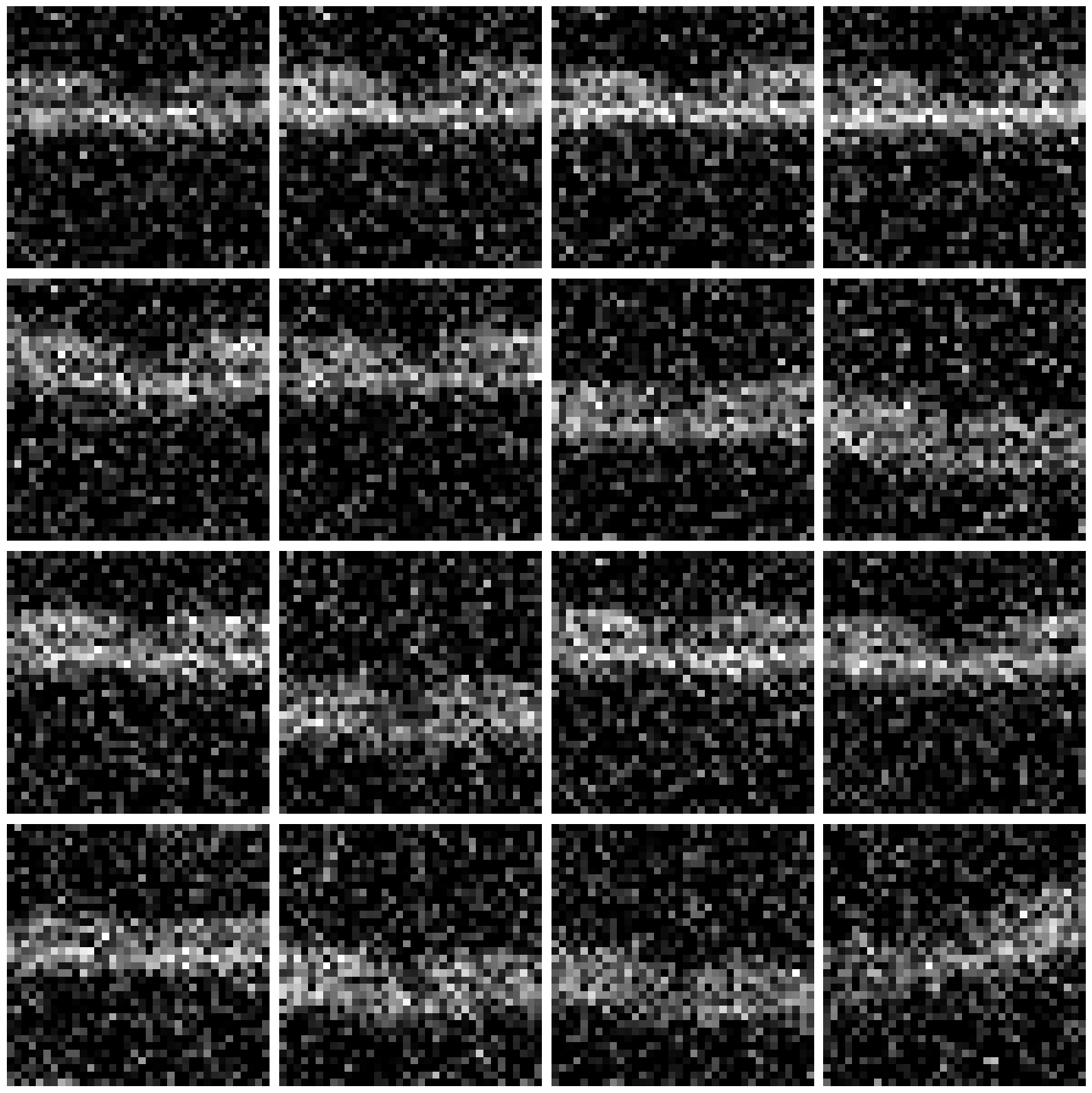}
    \subcaption{49\% dimensions retained}
  \end{subfigure}\hfill
  \begin{subfigure}[t]{0.32\linewidth}
    \centering
    \includegraphics[width=\linewidth]{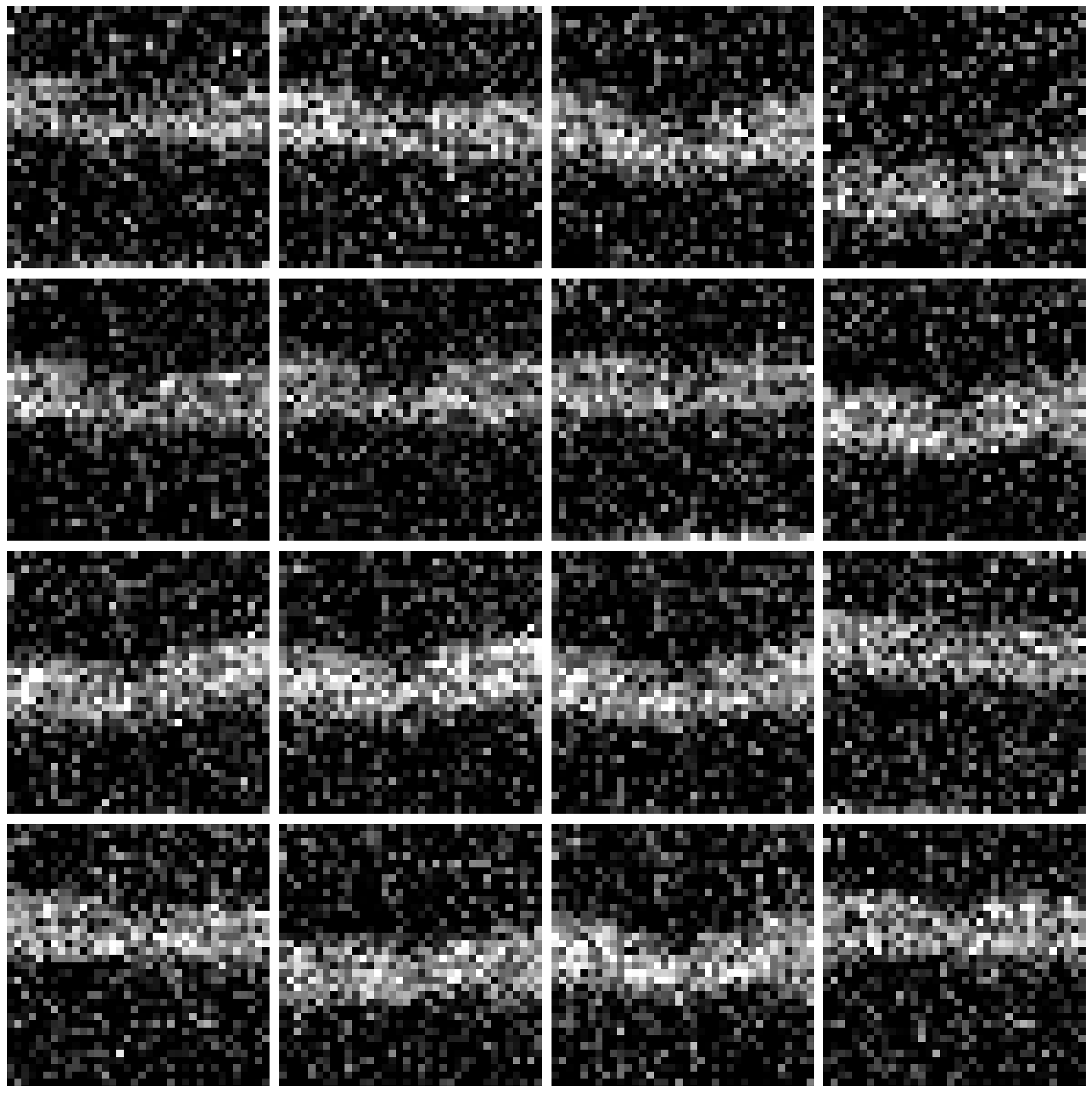}
    \subcaption{34\% dimensions retained}
  \end{subfigure}

  \caption{OCTMNIST generations at three compression levels.}
  \label{fig:oct-compression}
\end{figure}

\subsection{Results on ERA5 Reanalysis}

ERA5 Reanalysis dataset is provided by ECMWF on the large-scale precipitation fraction (LSPF) field (see Figure \ref{fig:original-oct} for illustration).
In contrast with the previous two subsections,
each snapshot is resized to a fixed ambient resolution of $80 \times 80$,
and then compressed to the retention levels $64\%$ ($64 \times 64$), $49\%$ ($56 \times 56$), and $36\%$ ($48 \times 48$).
\begin{figure}[h]
    \centering
    \includegraphics[width=0.35\linewidth]{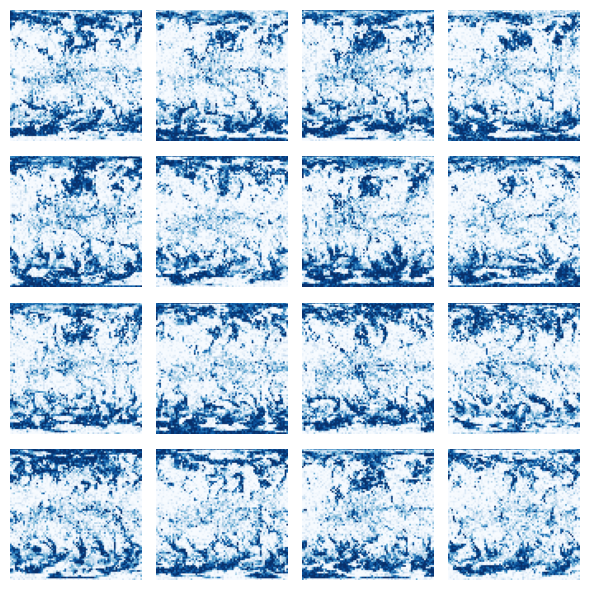}
    \caption{Original LSPF samples in Year 2023}
    \label{fig:original-oct}
\end{figure}

\quad Table \ref{tab:lspf-time} reports the per-sample wall-clock for diffusion inference in $\mathbb{R}^m$, FISTA recovery in $\mathbb{R}^d$, along with the speedup.
As the level of compression increases, 
the diffusion inference time in the latent space shortens significantly 
with the recovery remaining lightweight;
the net speedup increases steadily from $4.22\%$ to $59.31\%$.  
\begin{table}[h]
    \centering
    \resizebox{\textwidth}{!}{%
    \begin{tabular}{cccccc}
        \toprule
        Compression & Original Dim. & Original Dim. Inference Time & Low Dim. Inference Time & Recovery Time & Speedup \\
        \midrule
        64\% & 6400 $\to$ 4096 & 13.7545s / pic & 8.8029s / pic & 4.3712s / pic & \textbf{4.22\%} \\
        49\% & 6400 $\to$ 3136 & 12.8296s / pic & 6.2865s / pic & 1.7669s / pic & \textbf{37.23\%} \\
        36\% & 6400 $\to$ 2304 & 12.2049s / pic & 4.3938s / pic & 0.5721s / pic & \textbf{59.31\%} \\
        \bottomrule
    \end{tabular}
    }
    \caption{Comparison of generation time on LSPF}
    \label{tab:lspf-time}
\end{table}

\quad Figure \ref{fig:lspf-compression} illustrates CSDM generations at different compression levels.
With low compression/high retention ($64\%$), 
the generations are nearly indistinguishable from the full-resolution fields.
Fine-scale precipitation patterns are well preserved, with only minor smoothing in localized regions.
With high compression/low retention ($36\%$),
large-scale structures are still visible, but finer details are partially lost, and small patches may merge or vanish. 
Overall, CSDM achieves significant wall-clock savings,
while retaining essential spatial patterns on a complex and low-sparsity climate dataset. 
As the level of compression increases, artifacts manifest primarily in the loss of local variability, 
but the large-scale precipitation dynamics remain intact for downstream geophysical and risk analysis.
\begin{figure}[h]
  \centering
  \begin{subfigure}[t]{0.32\linewidth}
    \centering
    \includegraphics[width=\linewidth]{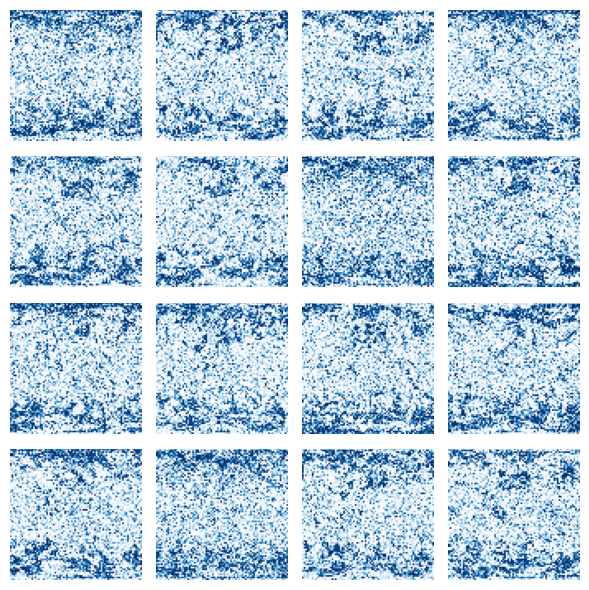}
    \subcaption{64\% dimensions retained}
  \end{subfigure}\hfill
  \begin{subfigure}[t]{0.32\linewidth}
    \centering
    \includegraphics[width=\linewidth]{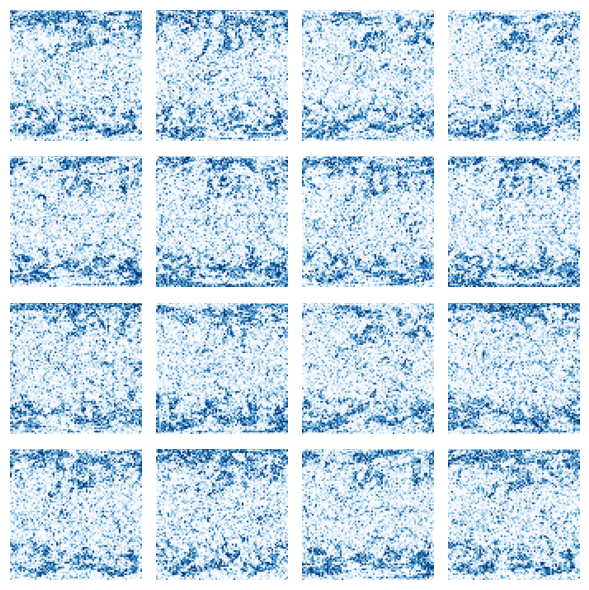}
    \subcaption{49\% dimensions retained}
  \end{subfigure}\hfill
  \begin{subfigure}[t]{0.32\linewidth}
    \centering
    \includegraphics[width=\linewidth]{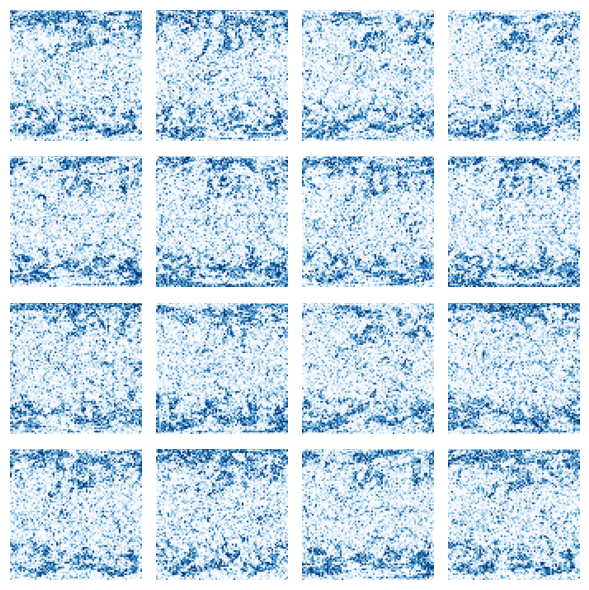}
    \subcaption{36\% dimensions retained}
  \end{subfigure}

  \caption{LSPF generations at three compression levels.}
  \label{fig:lspf-compression}
\end{figure}

\section{Numerical experiments on time series data}
\label{sc5}

\quad In this section, we further explore the idea of integrating diffusion models with dimension reduction techniques in the context of financial time series.
Previous works \cite{AB25, CX25} applied diffusion models for portfolio optimization.
Our focus here is on stress testing using a data-driven approach via diffusion generative models.
Specifically, we use principal component analysis (PCA) to find the most significant variance directions of macroeconomic factors.
We then train a diffusion model in the principle component (PC) space 
to generate synthetic PC data.
These generated PC data can be viewed as ``informative" factors,
which can be subsequently used for portfolio backtesting and stress testing via regression analysis.

\quad We train a diffusion model in a low-dimensional macro-factor space:
the first $6$ PCs computed from 126 FRED-MD factors \cite{mccracken2015fredmd},
corresponding to over $90\%$ of cumulative explained variance.
We then generate synthetic PC paths, and map them to the equity space
(AAPL, AMZN, COST, CVX, GOOGL, JPM, KO, MCD, NVDA, UNH)
for portfolio management (Section \ref{sc51}) and stress testing (Section \ref{sc52}).

\smallskip
{\em Value at Risk (VaR)}:
For a portfolio return $R$, the $\alpha$-quantile $Q_\alpha(R)$ induces the (left-tail) VaR as
\[
\mathrm{VaR}_{\alpha} = -\,Q_{\alpha}(R).
\]
The tables below report the quantiles of returns,
with the $1\%$, $5\%$, $10\%$ and $25\%$ rows corresponding  to $\mathrm{VaR}_{99\%}$, $\mathrm{VaR}_{95\%}$, $\mathrm{VaR}_{90\%}$ and $\mathrm{VaR}_{75\%}$ (after the sign change).
 
\subsection{Unconditional Portfolio Management}
\label{sc51}

We evaluate $6$-month cumulative log-returns under three portfolio constructions:
(i) Equal-Weight portfolio, 
(ii) Markowitz global minimum variance portfolio (GMVP), and
(iii) Risk-Parity portfolio.
If the low-dimensional PCs retain the key risk directions, 
then the generated data are expected to reproduce distributional properties (e.g., center, dispersion and tails).

\medskip
(i) {\em Equal-Weight Portfolio}: 
Figure \ref{fig:arith-hist} shows the histograms of real and generated 6-month cumulative log-returns,
and Table \ref{tab:arith-stats} provides the summary statistics. 
\begin{table}[h]
  \centering
  \renewcommand{\arraystretch}{1} 
  \begin{tabular}{lrr}
    \toprule
    \textbf{Statistics} & \textbf{Real 6M} & \textbf{Generated 6M} \\
    \midrule
    Mean           & 9.43\%   & 6.78\% \\
    Median         & 10.38\%  & 7.62\% \\
    Std Dev        & 8.83\%   & 9.13\% \\
    1\% Quantile   & -12.65\% & -16.35\% \\
    5\% Quantile   & -6.64\%  & -10.05\% \\
    10\% Quantile  & -2.67\%  & -5.25\% \\
    25\% Quantile  & 4.17\%   & 1.95\% \\
    \bottomrule
  \end{tabular}
  \caption{Equal-weight portfolio: real vs.\ generated 6M log-return statistics.}
  \label{tab:arith-stats}
\end{table}

The generated portfolio distribution aligns with the real one in location and scale,
while showing a heavier left tail.
The VaR errors are the largest for the equal-weight portfolio
(e.g., the 5\% quantile differs by $\approx\!3.41$pp),
indicating more mass in the synthetic left tail with no risk-adjusted weights.

\medskip
(ii) {\em Markowitz GMVP}:
Table \ref{tab:gmvp-weights} provides the GMVP weights (with short-selling not allowed),
and Figure \ref{fig:gmvp-hist} shows the histograms of real and generated log-returns.
The summary statistics (Table \ref{tab:gmvp-stats})
and the efficient frontiers (Figure \ref{fig:eff-front})
show that the real and generated portfolios have close mean and volatilities,
but moderate tail differences.
Also note that GMVP tails are much closer than those in the equal-weight case
(e.g., $|\Delta Q_{1\%}| \approx 0.62$pp and $|\Delta Q_{5\%}| \approx 1.03$pp),
suggesting that the PC diffusion generations preserve the covariance structure 
relevant to volatility minimization.
\begin{table}[h]
  \centering
  \renewcommand{\arraystretch}{1}
  \begin{tabular}{lrrrrr}
    \toprule
    \textbf{Source} & \textbf{AAPL} & \textbf{AMZN} & \textbf{COST} & \textbf{CVX} & \textbf{GOOGL} \\
    \midrule
    Real (\%)      & 0.00 & 4.74 & 17.83 & 1.61 & 9.47 \\
    Generated (\%) & 0.00 & 1.35 & 28.85 & 2.47 & 6.05 \\
    \bottomrule
  \end{tabular}

  \vspace{0.5em}

  \begin{tabular}{lrrrrr}
    \toprule
    \textbf{Source} & \textbf{JPM} & \textbf{KO} & \textbf{MCD} & \textbf{NVDA} & \textbf{UNH} \\
    \midrule
    Real (\%)      & 1.45 & 27.99 & 25.34 & 0.00 & 11.56 \\
    Generated (\%) & 0.00 & 39.40 & 13.39 & 0.00 & 10.96 \\
    \bottomrule
  \end{tabular}

  \caption{GMVP portfolio weights comparison (real vs.\ generated).}
  \label{tab:gmvp-weights}
\end{table}

\begin{table}[h]
  \centering
  \renewcommand{\arraystretch}{1}
  \begin{tabular}{lrr}
    \toprule
    \textbf{Statistics} & \textbf{Real GMVP} & \textbf{Predicted GMVP} \\
    \midrule
    Mean           & 7.26\%  & 7.28\%  \\
    Median         & 7.63\%  & 7.01\%  \\
    Std Dev        & 6.19\%  & 6.42\%  \\
    1\% Quantile   & -8.52\% & -7.90\% \\
    5\% Quantile   & -2.78\% & -3.81\% \\
    10\% Quantile  & -0.26\% & -0.73\% \\
    25\% Quantile  & 3.65\%  & 3.52\%  \\
    \bottomrule
  \end{tabular}
  \caption{GMVP: real vs.\ generated 6M log-return statistics.}
  \label{tab:gmvp-stats}
\end{table}

(iii) {\em Risky-Parity Portfolio}:
Table \ref{tab:rp-weights} reports the risk-parity weights,
which are similar in both settings.
Figure \ref{fig:rp-hist} provides the histograms of real and generated log-returns. 
The summary statistics (Table \ref{tab:rp-stats}) shows that 
mean and volatilities match,
and the left-tail quantiles differ by only $0.01$--$0.04$pp,
indicating that tail risk is effectively captured when portfolios are constructed from risk-balanced exposures.

\begin{table}[htbp]
  \centering
  \renewcommand{\arraystretch}{1}

  \begin{tabular}{lrrrrr}
    \toprule
    \textbf{Source} & \textbf{AAPL} & \textbf{AMZN} & \textbf{COST} & \textbf{CVX} & \textbf{GOOGL} \\
    \midrule
    Real (\%)      & 7.90 & 8.37 & 11.56 & 9.05 & 9.23 \\
    Generated (\%) & 7.31 & 8.62 & 12.29 & 9.04 & 9.43 \\
    \bottomrule
  \end{tabular}

  \vspace{0.5em}

  \begin{tabular}{lrrrrr}
    \toprule
    \textbf{Source} & \textbf{JPM} & \textbf{KO} & \textbf{MCD} & \textbf{NVDA} & \textbf{UNH} \\
    \midrule
    Real (\%)      & 8.23 & 14.67 & 14.27 & 5.42 & 11.30 \\
    Generated (\%) & 8.50 & 14.71 & 13.50 & 5.14 & 11.46 \\
    \bottomrule
  \end{tabular}

  \caption{Risk-Parity portfolio weights comparison (real vs.\ generated).}
  \label{tab:rp-weights}
\end{table}

\begin{table}[h]
  \centering
  \renewcommand{\arraystretch}{1}
  \begin{tabular}{lrr}
    \toprule
    \textbf{Statistics} & \textbf{Real RP} & \textbf{Predicted RP} \\
    \midrule
    Mean            & 8.55\%  & 8.54\%  \\
    Median          & 8.85\%  & 8.71\%  \\
    Std Dev         & 7.43\%  & 7.41\%  \\
    1\% Quantile    & -11.33\% & -11.32\% \\
    5\% Quantile    & -3.47\%  & -3.51\%  \\
    10\% Quantile   & -1.48\%  & -1.46\%  \\
    25\% Quantile   & 4.00\%   & 4.17\%   \\
    \bottomrule
  \end{tabular}
  \caption{Risk-Parity: real vs.\ generated 6M log-return statistics.}
  \label{tab:rp-stats}
\end{table}

\begin{figure}[h]
  \centering
  \begin{minipage}[t]{0.45\textwidth}
    \centering
    \includegraphics[width=\textwidth]{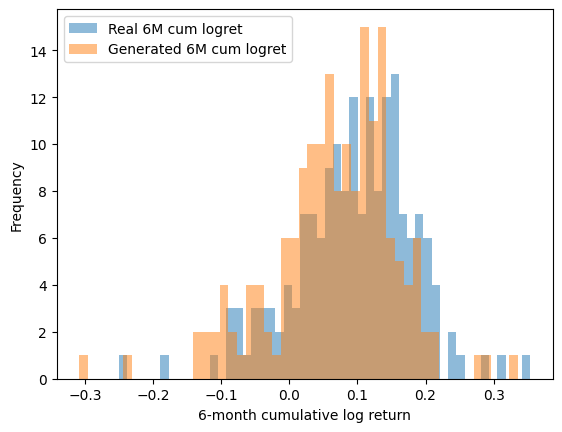}
    \caption{Equal-weight portfolio comparison.}
    \label{fig:arith-hist}
  \end{minipage}\hfill
  \begin{minipage}[t]{0.45\textwidth}
    \centering
    \includegraphics[width=\textwidth]{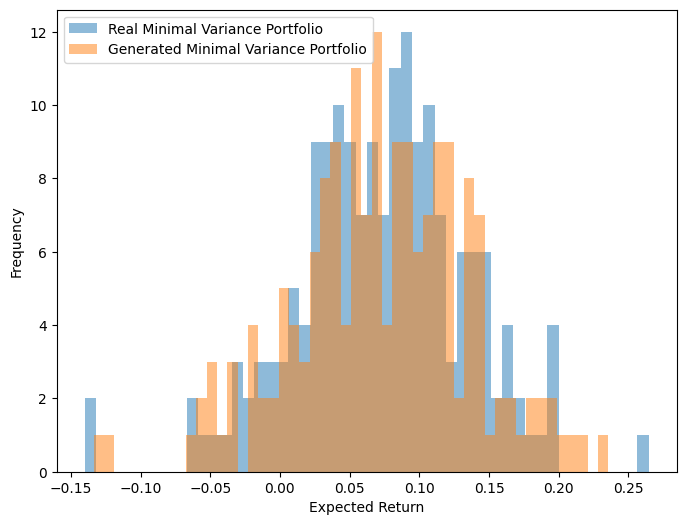}
    \caption{GMVP comparison.}
    \label{fig:gmvp-hist}
  \end{minipage}
  
  \vspace{0.5cm} 

  \begin{minipage}[t]{0.45\textwidth}
    \centering
    \includegraphics[width=\textwidth]{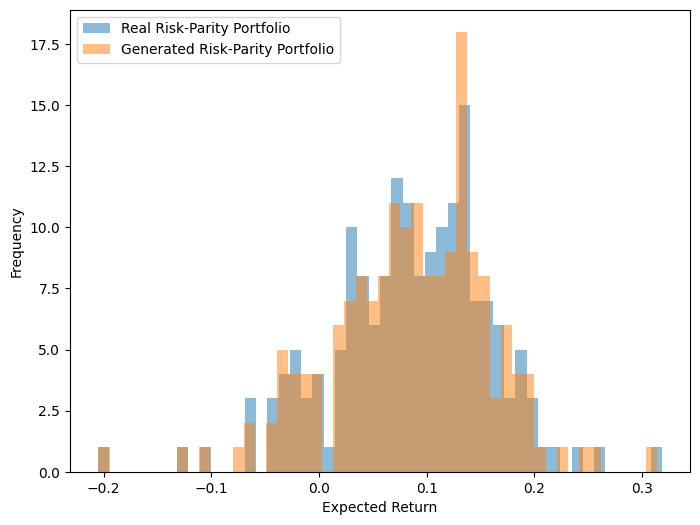}
    \caption{Risk-parity comparison.}
    \label{fig:rp-hist}
  \end{minipage}\hfill
  \begin{minipage}[t]{0.45\textwidth}
    \centering
    \includegraphics[width=\textwidth]{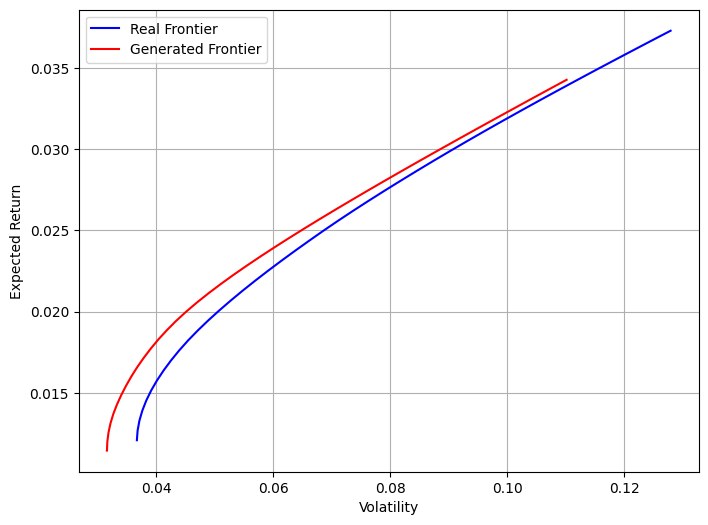}
    \caption{Efficient frontiers comparison.}
    \label{fig:eff-front}
  \end{minipage}
\end{figure}

\quad Overall, training a diffusion model in a low-dimensional macro-PC space 
has proven reliable in reproducing the real data distribution for backtesting
across equal-weight, GMVP, and risk-parity portfolios,
and crucially,
the left-tail quantiles are close to their empirical counterparts.
Weight patterns are consistent for GMVP and risk-parity, 
indicating that the low-dimensional PCs retain the covariance
structure that drives risk-aware portfolio construction.

\subsection{Standard scenario analysis (SSA)}
\label{sc52}

{\em Standard scenario analysis} (SSA) for financial portfolios is designed to estimate a portfolio’s return when some subsets of risk factors are subjected to stress \cite{HR20}. 
Here we follow the presentation of \cite{baker2025datadrivendynamicfactormodeling}.

\quad Under the standard multi-factor model for stock returns, 
if $X_t \in \mathbb{R}^d$ is the $d$-dimensional vector of common factors,
we define $\mathcal{S}$ to be the set containing the indices of the factors in a scenario (i.e., the factors that we intend to stress), 
and hence $\mathcal{S}^c$ is the index set of those factors that we leave un-stressed. 
SSA stresses the components of $X_{S,t}$
according to a given scenario (e.g., $+20$ on the S\&P index, $-10$ on the CPI index, and $+0.1$ on the US Dollar/Euro exchange rate), 
and keeps the components in $X_{S^C,t}$ unchanged  (i.e., equal to their current value).
The new portfolio P\&L, or overall return $V_t$ is then 
computed with $Y_t$ determined by the scenario and the multi-factor model.
To be more precise,
\begin{enumerate}[itemsep = 3 pt]
  \item 
  Let $\Delta X_{S,t+1} = X_{S,t+1} - X_{S,t}$ denote the $t+1$ scenario stress vector $\in \mathbb{R}^{|\mathcal{S}|}$.
  \item 
  Compute the SSA factor change vector:
  \[
  \Delta X^{SSA}_{i,t+1} =
  \begin{cases}
    \Delta X_{S,t+1}, & \text{for } i \in \mathcal{S}, \\
    0, & \text{for } i \in \mathcal{S}^c.
  \end{cases}
  \]
  \item 
  Obtain the predicted factor vector under SSA:
  $X^{SSA}_{t+1} = X_t + \Delta X^{SSA}_{t+1}$.
  \item 
  From the fitted neural net $f_{NN}$, predict the post-stress asset returns:
  $Y^{SSA-\text{stress}}_{t+1} = f_{NN}(X^{SSA}_{t+1})$.
\end{enumerate}

\quad We summarize the SSA procedure in the following algorithm, which we will later feed into a historical rolling window backtest. 
Let $s$ be the size of the rolling window.
Note that $f_{NN}$ is treated as an input in this algorithmic format. 
Denote by $x_{t-s:t} \in \mathbb{R}^{s \times d}$ the matrix of common factors from time $t-s$ up to $t$, and by $y_{t-s:t} \in \mathbb{R}^{s \times n}$ the matrix of asset returns from time $t-s$ up to $t$.
We take $x_{\text{actual},S,t+1}$ to be the realized historical scenario.
\begin{tcolorbox}
\textbf{Input:} $x_{t-s:t}$ (common factors in rolling window), 
$x_{t+1}$ (future factors), 
$f_{NN}$ (neural net fitted on the whole time series) \\[0.5em]
\textbf{Output:} $\hat{V}_{t+1,SSA}$ (portfolio return under SSA)

\begin{enumerate}
  \item Set $x_{S,t+1} \gets x_{\text{actual},S,t+1}$ and 
        $x_{S^C,t+1} \gets x_{S^C,t}$ to form $x^{SSA}_{t+1}$.
  \item Compute $y_{t+1,SSA} \gets f_{NN}(x^{SSA}_{t+1}$).
  \item Compute portfolio weights $w \gets 1/N$ (or Markowitz weights, or Risk-Parity).
  \item Compute portfolio return $\hat{V}_{t+1,SSA} \gets w^\top y_{t+1,SSA}$.
\end{enumerate}
\end{tcolorbox}

\quad In our experiments, 
the goal is to evaluate the performance of the generated data against real data 
in the context of financial stress testing.
For the real data, we train a neural network model $f_{NN}^{\text{macro}}$, where the input consists of 126 macroeconomic factors. 
In this setting, the variable $x^{SSA}$ in the aforementioned algorithm corresponds directly to these macro factors.
The generated data are constructed from the first six principal components (PCs). 
To process these synthetic features, we employ a different neural net $f_{NN}^{\text{PC}}$, which takes as input the time series of six PCs, 
and outputs the corresponding stock prices. 
In this case, $x^{SSA}$ are the values of the principal components instead of the original macroeconomic factors.

\quad We conduct SSA by stressing selected macro factors while holding all others fixed,
and then computing portfolio returns under three strategies:
Equal Weight, Markowitz GMVP, and Risk-Parity.
We compare {\em Real Data SSA} (macro factors fed to a neural net trained on all $126$ factors)
with {\em Generated Data SSA} (diffusion inference in the PC space).
Three scenarios are selected among combination of the four following factors: (1) Real Personal Income (RPI) from Group Output and Income; (2) All Employees: Service-Providing Industries (SRVPRD) from Group Labor Market; (3) New Orders for Consumer Goods (ACOGNO) from Group Orders and Inventories; (4) S\&P’s Common Stock Price Index: Composite (S\&P~500) from Group Stock Market. The scenarios are considered:
(1) RPI + SRVPRD; (2) S\&P~500 + ACOGNO; (3) all four (RPI, SRVPRD, S\&P~500, ACOGNO).
Figures \ref{fig:ssa-s1}--\ref{fig:ssa-s3} illustrate the empirical return distributions under the two data sources.
Across all strategies and scenarios,
the generated distributions exhibit strong alignment with the real data benchmarks
in terms of central tendency, dispersion, and tail behavior.
This is further confirmed by detailed summary statistics in Tables \ref{table:ssa-s1}--\ref{table:ssa-s3}.
Notably, under the GMVP strategy, the generated data reproduce the real mean return within a margin of less than 0.2\%, while the extreme quantiles (1\%, 5\%) also display high fidelity, 
suggesting accurate modeling of downside risks.
Similar consistency is observed for the risk-parity strategy, 
where the generated returns track both the scale and distributional shape of real data.

\begin{figure}
  \centering
  \includegraphics[width=1\textwidth]{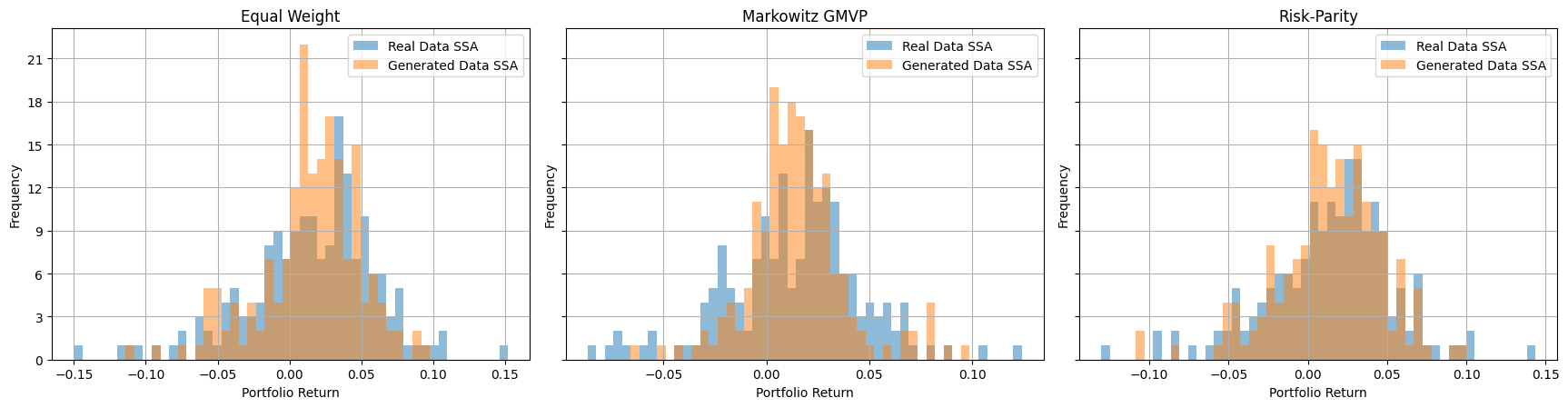}
  \caption{Real vs. Generated SSA (Scenario 1: RPI \& SRVPRD)}
  \label{fig:ssa-s1}
\end{figure}

\begin{table}[htbp]
\centering
\small
\resizebox{\textwidth}{!}{%
\begin{tabular}{llrrrrrrrr}
\toprule
Method & Source & mean & median & std & 1\% & 5\% & 10\% & 25\% \\
\midrule
Equal Weight   & Real Data SSA      & 0.015499 & 0.020229 & 0.045045 & -0.111204 & -0.061352 & -0.041970 & -0.008881 \\
Equal Weight   & Generated Data SSA & 0.014896 & 0.018448 & 0.034625 & -0.078333 & -0.053042 & -0.037523 &  0.001501 \\
Markowitz GMVP & Real Data SSA      & 0.013717 & 0.018585 & 0.033267 & -0.073650 & -0.038906 & -0.026783 & -0.004122 \\
Markowitz GMVP & Generated Data SSA & 0.015070 & 0.013561 & 0.023838 & -0.042804 & -0.021070 & -0.008217 &  0.003035 \\
Risk-Parity    & Real Data SSA      & 0.014376 & 0.018989 & 0.040037 & -0.096461 & -0.053468 & -0.037086 & -0.007106 \\
Risk-Parity    & Generated Data SSA & 0.013583 & 0.015629 & 0.033125 & -0.089767 & -0.047309 & -0.025966 & -0.003006 \\
\bottomrule
\end{tabular}
}
\caption{Summary statistics for Scenario 1 (RPI + SRVPRD).}
\label{table:ssa-s1}
\end{table}

\begin{figure}
  \centering
  \includegraphics[width=1\textwidth]{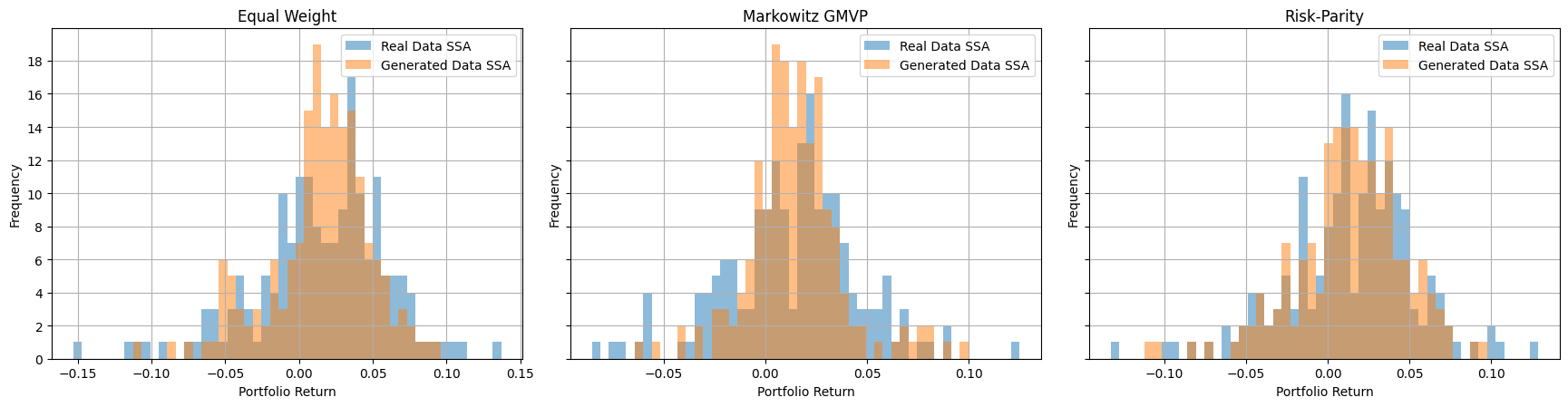}
  \caption{Real vs. Generated SSA (Scenario 2: S\&P~500 + ACOGNO).}
  \label{fig:ssa-s2}
\end{figure}

\begin{table}[htbp]
\centering
\small
\resizebox{\textwidth}{!}{%
\begin{tabular}{llrrrrrrrr}
\toprule
Method & Source & mean & median & std & 1\% & 5\% & 10\% & 25\% \\
\midrule
Equal Weight   & Real Data SSA      & 0.015336 & 0.020786 & 0.044551 & -0.110215 & -0.061371 & -0.041356 & -0.008964 \\
Equal Weight   & Generated Data SSA & 0.014808 & 0.018311 & 0.033909 & -0.078462 & -0.049675 & -0.040138 &  0.001706 \\
Markowitz GMVP & Real Data SSA      & 0.013588 & 0.018453 & 0.032662 & -0.073197 & -0.039339 & -0.026637 & -0.004582 \\
Markowitz GMVP & Generated Data SSA & 0.015042 & 0.014135 & 0.023652 & -0.042127 & -0.021630 & -0.008526 &  0.003317 \\
Risk-Parity    & Real Data SSA      & 0.014182 & 0.018976 & 0.039558 & -0.095410 & -0.052677 & -0.036798 & -0.007377 \\
Risk-Parity    & Generated Data SSA & 0.013485 & 0.015349 & 0.032551 & -0.086335 & -0.040740 & -0.026172 & -0.000841 \\
\bottomrule
\end{tabular}
}
\caption{Summary statistics for Scenario 2 (S\&P~500 + ACOGNO).}
\label{table:ssa-s2}
\end{table}

\begin{figure}
  \centering
  \includegraphics[width=1\textwidth]{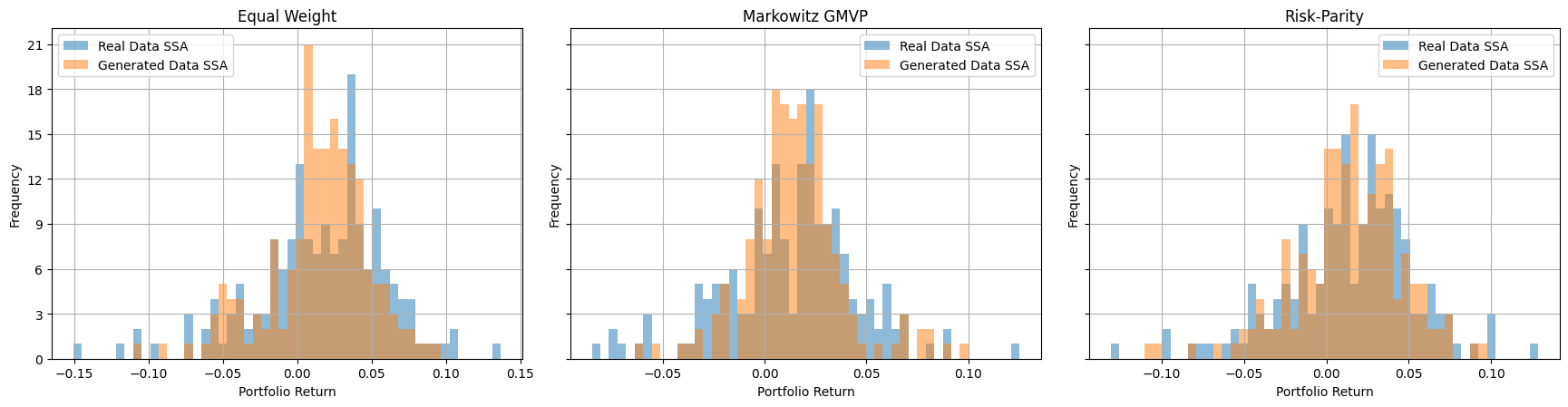}
  \caption{Real vs. Generated SSA (Scenario 3: RPI, SRVPRD, S\&P~500, ACOGNO).}
  \label{fig:ssa-s3}
\end{figure}

\begin{table}[htbp]
\centering
\small
\resizebox{\textwidth}{!}{%
\begin{tabular}{llrrrrrrrr}
\toprule
Method & Source & mean & median & std & 1\% & 5\% & 10\% & 25\% \\
\midrule
Equal Weight   & Real Data SSA      & 0.015177 & 0.018720 & 0.044467 & -0.110243 & -0.061746 & -0.041088 & -0.009005 \\
Equal Weight   & Generated Data SSA & 0.014738 & 0.018579 & 0.034219 & -0.079646 & -0.051498 & -0.040804 &  0.001805 \\
Markowitz GMVP & Real Data SSA      & 0.013350 & 0.017684 & 0.032525 & -0.070224 & -0.037711 & -0.026908 & -0.004720 \\
Markowitz GMVP & Generated Data SSA & 0.014791 & 0.014446 & 0.023785 & -0.043476 & -0.021829 & -0.009040 &  0.003186 \\
Risk-Parity    & Real Data SSA      & 0.014034 & 0.017923 & 0.039447 & -0.095408 & -0.052755 & -0.037053 & -0.007807 \\
Risk-Parity    & Generated Data SSA & 0.013469 & 0.014923 & 0.033141 & -0.087174 & -0.042830 & -0.027617 & -0.001268 \\
\bottomrule
\end{tabular}
}
\caption{Summary statistics for Scenario 3 (all four factors).}
\label{table:ssa-s3}
\end{table}

\section{Conclusion}
\label{sc6}

\quad In this study, we develop dimension reduction techniques to accelerate diffusion model inference for data generation.
The idea is to incorporate compressed sensing into diffusion sampling,
so as to facilitate the efficiency of both model training and inference.
Under suitable sparsity assumptions on data,
the proposed CSDM algorithm is proved to enjoy faster convergence,
and an optimal value for the latent space dimension is derived as a byproduct.
We also corroborate our theory with numerical experiments on various image data,
and financial time series for stress testing applications.

\quad There are several directions to extend this work.
First, an important problem is to derive sharper convergence rates of FISTA with explicit dimension dependence. 
This will allow us to obtain better complexity of the proposed CSDM algorithm.
Second, it will be interesting to integrate the proposed CSDM algorithm into conditional generation or guidance \cite{Dh21, HS21, Karras24, TX25},
and diffusion model alignment \cite{Black24, Fan23, ZC25}.
Finally, it will be desirable to further extend the study in Section~\ref{sc5} into a PCA + diffusion modeling framework.

\bigskip
{\bf Acknowledgment}:
Wenpin Tang is supported by
NSF grant DMS-2206038, the Columbia Innovation Hub grant, and the Tang Family Assistant
Professorship.
David D. Yao is part of a Columbia-CityU/HK collaborative
project that is supported by InnoHK Initiative, The Government of the HKSAR and the AIFT Lab.

\bibliographystyle{abbrv}
\bibliography{unique}
\end{document}